\documentclass[twoside]{article}

%
\usepackage[accepted]{aistats2025}
%


\usepackage{natbib}



\usepackage{graphicx,color}
\usepackage{xcolor,colortbl}
\usepackage{amsmath,amssymb}
\usepackage{multicol}
\usepackage{amsthm}
\usepackage{multirow}
\usepackage{booktabs}
\usepackage[skip=0pt]{subcaption}
\usepackage{lipsum}
\usepackage[shortlabels]{enumitem}
\usepackage{cancel}
\usepackage{wrapfig}
\usepackage{array}
\usepackage{siunitx}
\usepackage{csvsimple}
\usepackage[multidot]{grffile}
\usepackage{bbm}
\usepackage{dblfloatfix}
\usepackage{hyperref}
\usepackage{makecell}
\usepackage{bbm, dsfont}
\usepackage{mathtools}
\usepackage{comment}

\usepackage{mathrsfs}
\usepackage{todonotes}
\usepackage{tikz}
\usepackage{hyperref}
\usepackage{cleveref}


\usepackage{algorithm, algorithmic}
\usepackage{xfrac}
\usepackage{thmtools,thm-restate}
\usepackage{graphicx} %
\usepackage{color}

\usepackage{array}
\usepackage{amssymb}
\usepackage{amsmath}
\usepackage{xspace}
\usepackage{fancyhdr}
\usepackage{comment}
\usepackage{bm}

\DeclareMathOperator*{\Exps}{\mathbb{E}}
\DeclareMathOperator*{\argmin}{\mathrm{argmin}}
\newcommand{\ExP}[2]{\Exps_{#1}\left[{#2}\right]}
\newcommand{\Pdata}{\mathbb{P}^{\mathrm{data}}}

\newcommand{\Ptarget}{\mathbb{P}^{\mathrm{target}}}

\newcommand{\br}[1]{\left({#1}\right)}

\DeclareMathOperator{\Trace}{Trace}

\newcommand{\inner}[2]{\left\langle {#1}, {#2} \right\rangle}
\newcommand{\R}{\mathbb{R}}

\newcommand{\bfA}{{\bm{A}}}
\newcommand{\bfB}{{\bm{B}}}

\newcommand{\bfD}{{\bm{D}}}
\newcommand{\bfE}{{\bm{E}}}

\newcommand{\bfI}{{\bm{I}}}

\newcommand{\bfQ}{{\bm{Q}}}

\newcommand{\bfS}{{\bm{S}}}

\newcommand{\bfZ}{{\bm{Z}}}

\newcommand{\bfb}{{\bm{b}}}
\newcommand{\bfc}{{\bm{c}}}

\newcommand{\bfm}{{\bm{m}}}

\newcommand{\bfp}{{\bm{p}}}
\newcommand{\bfq}{{\bm{q}}}
\newcommand{\bfr}{{\bm{r}}}
\newcommand{\bfs}{{\bm{s}}}

\newcommand{\bfw}{{\bm{w}}}
\newcommand{\bfx}{{\bm{x}}}
\newcommand{\bfy}{{\bm{y}}}
\newcommand{\bfz}{{\bm{z}}}
\newcommand{\bfzadv}{\bm{z}^{\textup{adv}}}

\newcommand{\bfSigma}{\ensuremath{\bm{\varSigma}}}
\newcommand{\bftheta}{\ensuremath{\bm{\theta}}}

\newcommand{\bfphi}{\ensuremath{\bm{\phi}}}

\newtheorem{lem}{Lemma}[section]
\newtheorem{remark}{Remark}[section]

\usepackage{arydshln}
\usetikzlibrary{calc}
\usepackage{amsthm}
\usepackage{pifont}
\newcommand{\xmark}{\ding{55}}%
\newtheorem{theorem}{Theorem}

\newcommand{\maryam}[1]{\textcolor{magenta}{Maryam: #1}}
\newcommand{\djcomment}[1]{\textcolor{brown}{Dj: #1}}
\newcommand{\avi}[1]{\textcolor{blue}{Avi: #1}}


\newcommand{\bmat}[1]{\begin{bmatrix}#1\end{bmatrix}}

\newcommand{\norm}[1]{\|#1\|}
\newcommand{\Thetaspace}{{\boldsymbol{\Theta}}}

\newcommand{\Zspace}{{\boldsymbol{Z}}}
\newcommand{\Aspace}{\mathcal{A}}
\newcommand{\Phispace}{\boldsymbol{\Phi}}

\newcommand{\OPT}{\textup{\sf OPT}}

\begin{document}

%
\runningtitle{Keeping up with dynamic attackers: Certifying robustness to adaptive online data poisoning}

%

\twocolumn[
\aistatstitle{Keeping up with dynamic attackers:\\Certifying robustness to adaptive online data poisoning}

\aistatsauthor{Avinandan Bose \And Laurent Lessard \And  Maryam Fazel \And Krishnamurthy Dj Dvijotham}

\aistatsaddress{ University of Washington\\
avibose@cs.washington.edu \And  Northeastern University\\
 l.lessard@northeastern.edu \And University of Washington\\
mfazel@uw.edu \And ServiceNow Research\\ dvij@cs.washington.edu} ]


\begin{abstract}
  The rise of foundation models fine-tuned on human feedback from potentially untrusted users has increased the risk of adversarial data poisoning, necessitating the study of robustness of learning algorithms against such attacks. Existing research on provable certified robustness against data poisoning attacks primarily focuses on certifying robustness for static adversaries who modify a fraction of the dataset used to train the model \emph{before the training algorithm is applied}. In practice, particularly when learning from human feedback in an online sense, adversaries can observe and react to the learning process and inject poisoned samples that optimize adversarial objectives better than when they are restricted to poisoning a static dataset once, before the learning algorithm is applied. Indeed, it has been shown in prior work that online dynamic adversaries can be significantly more powerful than static ones. We present a novel framework for computing certified bounds on the impact of dynamic poisoning, and use these certificates to design robust learning algorithms. We give an illustration of the framework for the mean estimation and binary classification problems and outline directions for extending this in further work. The code to implement our certificates and replicate our results is available at \url{https://github.com/Avinandan22/Certified-Robustness}.\looseness=-1
\end{abstract}

\section{INTRODUCTION}


With the advent of foundation models fine tuned using human feedback gathered from potentially untrusted users (for example, users of a publicly available language model) \citep{christiano2017deep, ouyang2022training}, the potential for adversarial or malicious data entering the training data of a model increases substantially. This motivates the study of robustness of learning algorithms to poisoning attacks \citep{biggio2012poisoning}. More recently, there have been works that attempt to achieve ``certified robustness`` to data poisoning, i.e., proving that the worst case impact of poisoning is below a certain bound that depends on parameters of the learning algorithm. All the work in this space, to the best of our knowledge, focuses on the \emph{static} poisoning adversary \citep{steinhardt2017certified, zhang2022bagflip}. Even in \citep{wang2024temporal} which is the closest setting to our work, the poisoning adversary acts over offline datasets in a temporally extended fashion which are poisoned in one shot, and thus is not 
dynamic.
%
There has been work on \emph{dynamic} attack algorithms \citep{zhang2020online, wang2018data} showing that these attacks can indeed be more powerful than static attacks. 
This motivates the question we study: can we obtain certificates of robustness for a broad class of learning algorithms against \emph{dynamic} poisoning adversaries?

In this paper, we study learning algorithms corrupted by a dynamic poisoning adversary who can observe the behavior of the algorithm and adapt the poisoning in response. This is relevant in scenarios where models are continuously/periodically updated in the face of new feedback, as is common in RLHF/fine tuning applications (see Figure~\ref{fig:schematic}). We provide (to the best of our knowledge) the first general framework for computing certified bounds on the worst case impact of a dynamic data poisoning attacker, and further, use this certificate to design robust learning algorithms (see Section~\ref{sec:setup}). We give an illustration of the framework for the mean estimation problem (see Section~\ref{sec:mean_estimation}) and binary classification problem (see Section~\ref{sec:binary_classification}) and suggest directions for future work to apply the framework to
other practical learning scenarios.
Our contributions are as follows:
\begin{enumerate}
    \item We develop a
    framework for computing certified bounds on the worst case impact of a dynamic online poisoning adversary on a learning algorithm as a finite dimensional optimization problem. The framework applies to an arbitrary learning algorithm and a general adversarial formulation described in Section~\ref{sec:setup}. However, instantiating the framework in a computationally tractable way requires additional work, and we show that this instantiation can be done for certain cases.
    \item We demonstrate that for learning algorithms designed for mean estimation (Section~\ref{sec:mean_estimation}) and binary classification problems with linear classifiers (Section~\ref{sec:binary_classification}), we can tractably compute bounds (via dual certificates) for learning algorithms that use either regularization or noise addition as a defense against data poisoning. We leave extensions to broader learning algorithms to future work.    \looseness=-1
    \item We use these 
    certificates to choose parameters of a learning algorithm so as to trade off performance and robustness, and thereby derive robust learning algorithms (Section~\ref{sec:meta_learning}).
    \item We conduct experiments on real and synthetic datasets to empirically validate our certificates of robustness, as well as using the meta-learning setup to design defenses (Section~\ref{sec:experiments}).
\end{enumerate}

\begin{figure*}
    \centering
    \includegraphics[width=0.8\linewidth]{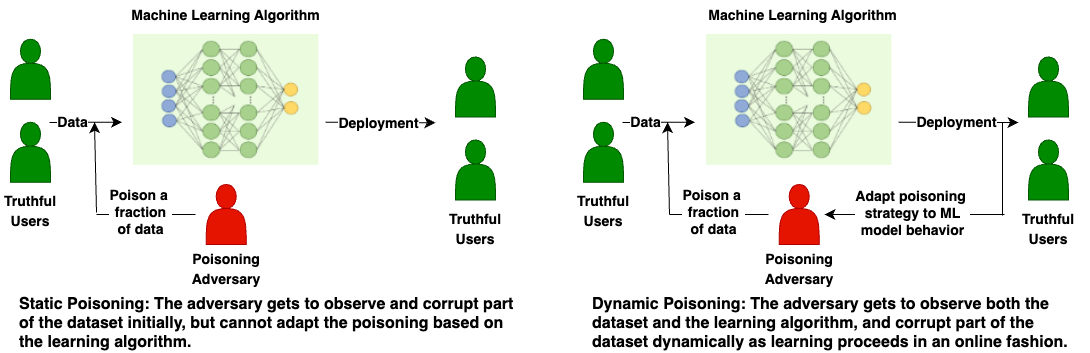}
    \caption{A schematic diagram to highlight the differences between static and dynamic poisoning. }
    \vskip -0.2in
    \label{fig:schematic}
\end{figure*}
\subsection{Related Work}\label{appendix:related_work}
\paragraph{Data Poisoning.} Modern machine learning pipelines involve training on massive, uncurated
datasets that are potentially untrustworthy and of such scale that conducting rigorous quality checks becomes impractical. Poisoning attacks \citep{biggio2012poisoning, newsome2006paragraph, biggio2018wild} pose big security concerns upon deployment of ML models. Depending on which stage (training / deployment)  the poisoning takes place, they can be characterised as follows: 1. Static attacks: The model is trained on an offline dataset with poisoned data. Attacks could be untargeted, which aim to prevent training convergence rendering an unusable model and thus denial of service \citep{tian2022comprehensive}, or targeted, which are more task-specific and instead of simply increasing loss, attacks of this kind seeks to make the model output wrong predictions on specific tasks. 2. Backdoor attacks: In this setting, the test / deployment time data can be altered \citep{chen2017targeted, gu2017badnets, han2022physical, zhu2019transferable}. Attackers manipulate a small proportion of the data such that, when a specific pattern / trigger is seen at test-time, the model returns a specific, erroneous prediction. 3. Dynamic (and adaptive) attacks: In scenarios where models are continuously/periodically updated in the face of new feedback, as is common in RLHF/fine tuning applications, a dynamic poisoning adversary \cite{wang2018data, zhang2020online} can observe the behavior of the learning algorithm and adapt the poisoning in response.
\paragraph{Certified Poisoning Defense.} Recently, there have been works that attempt to achieve ``certified robustness" to data poisoning, i.e., proving that the worst case impact of \textit{any} poisoning strategy is below a certain bound that depends on parameters of the learning algorithm. All the work in this space, to the best of our knowledge, focuses on the \emph{static} or \textit{backdoor} attack adversary. \citep{steinhardt2017certified} provide certificates for linear models trained with gradient descent, \citep{rosenfeld2020certified}  present a statistical upper-bound on the effectiveness of $\ell_2$ perturbations on training labels for linear models using randomized smoothing, \citep{zhang2022bagflip, sosnin2024certified} present a model-agnostic certified approach that can effectively defend against both trigger-less and backdoor attacks, \citep{xie2022uncovering} observe that differential privacy, which usually covers addition or removal of data points, can also provide statistical guarantees in some limited poisoning settings. Even in \citep{wang2024temporal} which is the closest setting to our work, the poisoning adversary acts over offline datasets in a temporally extended fashion which are poisoned in one shot, and thus is not dynamic.
\section{PROBLEM SETUP}\label{sec:setup}
We now develop the exact problem setup that we study in the paper. We consider a learning algorithm aimed at estimating parameters $\bftheta \in \Theta$, and each step of the learning algorithm
updates the estimates of these parameters based on potentially poisoned data. The following components fully define the problem setup. A notation table in provided in Appendix~\ref{appendix:notation}.


\begin{table*}[t]
\centerline{
  \begin{tabular}{l|c|c|c}
  \multirow{3}{*}{Attack Type} & Adversary adapts poisoning  & Adversary can   & \multirow{2}{*}{Certified}\\
  & strategy upon observing & poison data  & \multirow{2}{*}{robustness} \\
   & model behavior & for deployed model &   \\
  \hline
   Static / One-shot
   &  \xmark\cellcolor{red!25} &\xmark\cellcolor{red!25} & \checkmark\cellcolor{green!25} \\
  \hline
   Backdoor
   & \xmark\cellcolor{red!25} & \checkmark\cellcolor{green!25}  &\checkmark\cellcolor{green!25} \\
   \hline
   Dynamic attack only
   &  \checkmark\cellcolor{green!25} & \checkmark\cellcolor{green!25}  &\xmark\cellcolor{red!25} \\
  \hline
    Dynamic attack \& defense (Ours) & \checkmark\cellcolor{green!25}  & \checkmark\cellcolor{green!25} & \checkmark\cellcolor{green!25} \\
\end{tabular}
}
\caption{A comparison with lines of work closest to ours. Static/One-shot (\citep{tian2022comprehensive, steinhardt2017certified, rosenfeld2020certified}), Backdoor(\citep{chen2017targeted, gu2017badnets, han2022physical, zhu2019transferable, zhang2022bagflip, sosnin2024certified}), Dynamic attack only (\cite{wang2018data, zhang2020online}.A detailed description is provided in Section~\ref{appendix:related_work}.}
\label{tab:my_label}
\vskip -0.2in
\end{table*}

\paragraph{Online learning algorithm.}
We consider online learning algorithms that operate by receiving a new datapoint at each step  and making an update to model parameters being estimated. In particular, we consider learning algorithms that can be written as
\begin{align}\bftheta_{t+1} \gets F_{\phi}\Bigl(\underbrace{\bftheta_t}_{\text{Parameter}}, \underbrace{\bfz_t}_{\text{Datapoint}}\Bigr),\label{eq:dyn}
\end{align}
where $F_{\phi}: \Thetaspace \times \Zspace \to \Thetaspace$ is a parameterized function that maps the current model parameters $\bftheta_t$ to new model parameters $\bftheta_{t+1}$, based on the received datapoint $\bfz_t$, where $\phi \in \Phispace$ is a hyperparameter, for example, learning rate in a gradient based learning algorithm, or strength of regularization used in the objective function.


\textbf{Example} To illustrate the setup we consider a simple toy example where we try to estimate the mean of the datapoints via gradient descent on the $\ell_2$ regularized squared Euclidean loss. Given a current estimate $\bftheta_t$, upon receiving a datapoint $\bfz_t$, the update step can be written as:
\begin{align*}
    \bftheta_{t+1} &= \bftheta_t - \eta \nabla \bigl(\tfrac{1}{2}\|\bfz_t - \bftheta_t\|_2^2 + \tfrac{\sigma}{2} \|\bftheta_t\|_2^2\bigr)\\
    &= (1 - \eta - \eta \sigma) \bftheta_t + \eta \bfz_t\\
    &= F_{\phi}(\bftheta_t, \bfz_t).
\end{align*}
Here $\phi = \{\eta, \sigma\}$ denotes the learning rate and regularization parameter, and are the hyperparameters of the learning algorithm.
%
Note that $F_\phi$ is a general update rule and we do not make any assumptions about $F_\phi$.



\paragraph{Poisoned learning algorithm.}

We consider a setting where some of the data points received by the learning algorithm are corrupted by an adversary, who is allowed to choose corruptions as a function of the entire trajectory of the learning algorithm up to that point. We refer to such an adversary, who can observe the full trajectory and decide on the next corruption accordingly, as a \textit{dynamic adaptive adversary}. While this may seem unrealistic, since our goal here is to compute certified bounds on the worst case adversary, we refrain from placing informational constraints on the adversary, as an adversary with sufficient side knowledge can still infer hidden parameters of the model from even from just a prediction API \cite{tramer2016stealing}.

The adversary is restricted to select a corrupted data point $\bfzadv_t \in \Aspace$,  which reflects constraints such as input feature normalization or the adversary trying to avoid outlier detection mechanisms used by the learner. We make no additional assumptions about the specific poisoning strategy employed by the adversary.
Thus, our certificates of robustness to poisoning apply to \emph{any dynamic adaptive adversary who chooses poisoned data points from the set $\Aspace$.}

We assume that with a fixed probability, the data point the algorithm receives at each time step is poisoned. In practice, this could reflect the situation that out of a large population of human users providing feedback to a learning system, a small fraction are adversarial and will provide poisoned feedback. Let $\Pdata$ denote the benign distribution of data points. Mathematically, the data point $\bfz_t$
received by the learning algorithm at time $t$ is sampled according to $\bfz_t \sim  \epsilon \delta
(\bfzadv_t) + (1-\epsilon)\Pdata$,
where $\delta(\cdot)$ denotes the Dirac delta function,
and $\epsilon$ is a parameter that controls the ``level'' of poisoning (analogous to the fraction of poisoned samples in static poisoning settings \citep{steinhardt2017certified}). This is a special case of Huber's contamination model, which is used in the robust statistics literature \citep{diakonikolas2023algorithmic} with the contamination model being a Dirac distribution. For compactness of the data generation process we define the following:
\begin{align}
    \mathbb{P}_{\epsilon}(\bfzadv):= \epsilon\delta
(\bfzadv) + (1-\epsilon)\Pdata.
\end{align}



\subsection{Adversarial Objective}
\paragraph{Transition Kernel.}
Starting with a parameter estimate $\bftheta$, the adversary chooses $\bfzadv$, then the learning algorithm updates the parameter estimate (via $F_\phi$). The transition kernel gives the probability (or probability density) that the parameter estimate assumes a value $\bftheta^\prime$ after the above steps, and is defined as (recall that $\delta(\cdot)$ denotes the Dirac delta function):
\begin{align}
    \Pi(\bftheta^\prime | \bftheta, \bfzadv) = \ExP{\bfz \sim \mathbb{P}_{\epsilon}(\bfzadv)}{\delta
    (F_{\phi}(\bftheta, \bfz) - \bftheta^\prime)}.
\end{align}
\paragraph{Dynamics as a Markov Process.}
The dynamics in Eq. \eqref{eq:dyn} gives rise to a Markov 
process over the parameters $\bftheta$. If $\mathbb{P}_t$ denotes the distribution over parameters at time $t$, we have
\begin{align}
    \mathbb{P}_{t+1}\br{\bftheta^\prime} = \int \Pi(\bftheta^\prime | \bftheta, \bfzadv)
\mathbb{P}_{t}\br{\bftheta}d\bftheta.
\end{align}
Since the learning algorithm (dynamics of the parameters) is a Markov process, the sequence of actions for the adversary (i.e., choices of $\bfzadv$) constitute a Markov Decision Process with
\begin{align*}
 \underbrace{\bftheta}_{\text{States}},  \underbrace{\bfzadv}_{\text{Actions}},   \underbrace{\Pi(\cdot | \bftheta, \bfzadv) }_{\text{Transition Kernel}}.
\end{align*}
\paragraph{Adversarial objective function.}

We assume that the poisoning adversary is interested in maximizing some adversarial objective $\ell_{\textrm{adv}}: \Thetaspace \mapsto \R$, for example, the expected prediction error on some target distribution of interest to the adversary. The adversary wants to choose actions such that it can maximize its average reward over time. Utilizing the fact that the optimal policy for an MDP is stationary (i.e., the policy is time invariant), we define the adversary's objective for an arbitrary stationary policy $\bfzadv \sim \pi(\cdot| \bftheta)$ as follows:
\begin{align}
    \rho(\pi) = \lim_{T \to \infty}\frac{1}{T} \ExP{\pi}{\sum_{t=1}^T \ell_{\textrm{adv}} (\bftheta_t)}, \label{eq:avg_cost}
\end{align}
where the expectation is with respect to the noisy state transition dynamics induced by the adversary's poisoning policy $\pi$.

We utilize the fact that $\rho(\pi)$ is equal to the expected reward under the \textit{stationary state distribution}
(assuming the MDP is ergodic, see details in Appendix~\ref{appendix:proofs}):
\begin{align*}
    \rho(\pi) = \ExP{\bftheta \sim d_{\pi}(\cdot)}{\ell_{\rm adv}(\bftheta)}.
\end{align*}
The stationary state is defined as a condition where the distribution of parameters remains unchanged over time. In other words, the distribution of parameters at any given point in the stationary state is identical to the distribution of parameters at the next state.

The stationarity condition can be expressed mathematically in terms of the transition kernel as:
\begin{align*}
    \ExP{\substack{\bftheta \sim d_{\pi}(\cdot) \\ \bfzadv \sim \pi(\cdot|\bftheta)}}{\Pi(\bftheta^\prime | \bftheta, \bfzadv)} =  d_{\pi}(\bftheta^\prime) \quad \forall \bftheta^\prime \in \Theta.
\end{align*}

Given a family of learning algorithms $F_\phi$ with tunable parameters $\phi \in \Phispace$, our goal is to estimate $\phi$ so that our learning algorithm is robust to the poisoning as described above. However, since we assume that we are working in the online setting, it is seldom the case that we know the data distribution $\Pdata$ in advance, making the adversary's objective intractable. In Section~\ref{sec:meta_learning}, we use a meta learning formulation to overcome the lack of knowledge about $\Pdata$ in advance.

\subsection{Meta-learning a robust learning algorithm}\label{sec:meta_learning}
 In a meta learning setup \citep{hochreiter2001learning, andrychowicz2016learning}, we suppose that we have access to a meta-distribution from which data distributions can be sampled. In such a setup, we can ``simulate" various data distributions and consider the following approach: We take a family of learning algorithms $F_\phi$ with tunable parameters $\phi \in \Phispace$,
 and attempt to design the parameters $\phi$ of the learning algorithm to trade-off performance and robustness in expectation over the data distributions sampled from the meta-distribution.

In particular, in the absence of poisoned data, the updates \eqref{eq:dyn} on data sampled from benign data distribution $\Pdata$ result in a stationary distribution over model parameters $\bftheta$ denoted by $\mathbb{P}\bigl(\phi, \Pdata\bigr)$. The expected benign target loss can be written as :
\begin{align}
    b(\phi, \Pdata) = \ExP{\bftheta \sim \mathbb{P}\br{\phi, \Pdata}}{\ell\br{\bftheta}},
\end{align}
where $\ell : \Thetaspace \mapsto \R$ is the loss the learning algorithm wants to minimize.

In Section~\ref{sec:certificate}, we propose a general formulation to derive an upper bound on the worst case impact of an adversary on the target loss (a certificate), which we denote by $c(\phi, \Pdata)$ for a given data distribution $\Pdata$ and parameter of the learning algorithm $\phi \in \Phispace$.

Given a meta distribution $\mathcal{P}$, we can propose the following criterion to design a robust learning algorithm:
\begin{align}\label{eq:main_formulation}
    &\inf_{\phi \in \Phispace} \mathbb{E}_{\Pdata \sim \mathcal{P}}\bigg[b(\phi, \Pdata) + \kappa \cdot c(\phi, \Pdata) \bigg].
\end{align}
where $\kappa > 0$ is a trade-off parameter. The expectation over $\mathcal{P}$ is a meta-learning inspired formulation, where we are designing a learning algorithm that is good ``in expectation" under a meta-distribution over data distributions. The first term constitutes ``doing well" in the absence of the adversary by converging to a stationary distribution over parameters that incurs low expected loss. The second term is an upper bound on the worst case loss incurred by the learning algorithm in the presence of the adversary.\looseness=-1


\subsection{Technical Approach: Certificate of Robustness} \label{sec:certificate}

For a given $\Pdata \sim \mathcal{P}$, we attempt to find an upper bound on the worst case impact of the adversary.
Recalling that the sequence of actions for the adversary constitutes a Markov Decision Process, the value of the adversarial objective for the adversary's optimal action sequence is therefore the average reward in the infinite horizon Markov Decision Process setting \citep{malek2014linear} and can be written as the solution of an \emph{infinite dimensional} linear program (LP)  \cite{puterman2014markov}.
The LP can be written as:
\begin{align}\label{eq:lp_inf}
    &\sup_{\substack{d_{\pi} \in \mathcal{P}[\Thetaspace] \\ \pi \in \mathcal{P}[\Thetaspace \times \bfZ]}}  \quad \ExP{\bftheta \sim d_{\pi}(\cdot)}{\ell_{\textup{adv}}\br{\bftheta}}, \quad \text{subject to } \\
       & \ExP{\substack{\bftheta \sim d_{\pi}(\cdot) \\ \bfzadv \sim \pi(\cdot|\bftheta)}}{\Pi(\bftheta^\prime | \bftheta, \bfzadv)} =  d_{\pi}(\bftheta^\prime) \quad \forall \bftheta^\prime \in \Theta \nonumber,
\end{align}
where $\mathcal{P}[\Thetaspace]$, $\mathcal{P}[\Thetaspace \times \bfZ]$ denote the space of probability measures on $\Thetaspace$ and $\Thetaspace \times \bfZ$ respectively.

We are now ready to present our 
certificate of robustness against dynamic data poisoning adversaries, which is the largest objective value any dynamic adversary can attain in the stationary state.
\begin{restatable}{theorem}{certificate}\label{thm:certificate}
 For any function $\lambda: \Theta \mapsto \R$, for any dynamic adaptive adversary, the average loss \eqref{eq:avg_cost} is bounded above by
\begin{align}
    \sup_{\substack{\bftheta \in \Thetaspace \\ \bfzadv \in \Aspace}} \quad \ExP{\bftheta^\prime \sim \Pi(\cdot | \bftheta, \bfzadv)}{\lambda\br{\bftheta^\prime}} + \ell_{\textup{adv}}\br{\bftheta}-\lambda\br{\bftheta}. \label{eq:cert_general}
\end{align}
\end{restatable}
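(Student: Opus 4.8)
The plan is to establish the bound via \emph{weak duality} for the infinite-dimensional LP \eqref{eq:lp_inf}, treating $\lambda:\Thetaspace\to\R$ as the Lagrange multiplier attached to the stationarity (equality) constraint. First I would invoke the fact stated just above \eqref{eq:lp_inf} and justified in Appendix~\ref{appendix:proofs} under ergodicity: the average loss \eqref{eq:avg_cost} achievable by \emph{any} dynamic adaptive adversary is upper bounded by the value of \eqref{eq:lp_inf}, because an optimal average-reward policy can be taken stationary and its long-run average reward equals $\ExP{\bftheta\sim d_\pi}{\ell_{\textup{adv}}(\bftheta)}$ for the induced stationary occupancy $d_\pi$. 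It therefore suffices to show that the LP value is at most \eqref{eq:cert_general} for every fixed $\lambda$.

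To do so, I would take an arbitrary feasible pair $(d_\pi,\pi)$ and add to the objective the quantity $\int\lambda(\bftheta')\bigl[\ExP{\bftheta\sim d_\pi,\,\bfzadv\sim\pi}{\Pi(\bftheta'|\bftheta,\bfzadv)}-d_\pi(\bftheta')\bigr]\dd\bftheta'$, which vanishes identically since $(d_\pi,\pi)$ satisfies the stationarity constraint. Next I would interchange the order of integration in $\int\lambda(\bftheta')\,\ExP{\bftheta\sim d_\pi,\,\bfzadv\sim\pi}{\Pi(\bftheta'|\bftheta,\bfzadv)}\dd\bftheta'$ via Fubini, rewriting it as $\ExP{\bftheta\sim d_\pi,\,\bfzadv\sim\pi}{\ExP{\bftheta'\sim\Pi(\cdot|\bftheta,\bfzadv)}{\lambda(\bftheta')}}$, and identify the subtracted term as $\ExP{\bftheta\sim d_\pi}{\lambda(\bftheta)}$ after relabeling $\bftheta'\mapsto\bftheta$. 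Since terms depending only on $\bftheta$ may be lifted into the joint expectation over $\bfzadv\sim\pi(\cdot|\bftheta)$ (a probability measure), the objective becomes
\begin{align*}
\ExP{\substack{\bftheta\sim d_\pi\\ \bfzadv\sim\pi(\cdot|\bftheta)}}{\ExP{\bftheta'\sim\Pi(\cdot|\bftheta,\bfzadv)}{\lambda(\bftheta')}+\ell_{\textup{adv}}(\bftheta)-\lambda(\bftheta)}.
\end{align*}

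Because $\pi(\cdot|\bftheta)$ is supported on $\Aspace$ and the bracketed integrand is a fixed function of $(\bftheta,\bfzadv)$, I would bound it pointwise by its supremum over $\bftheta\in\Thetaspace,\,\bfzadv\in\Aspace$ and pull that supremum outside the expectation, which yields exactly \eqref{eq:cert_general}. As the inequality holds for every feasible $(d_\pi,\pi)$, it holds for the LP value, closing the chain. The step I expect to be the main obstacle is the measure-theoretic bookkeeping: verifying that the stationary occupancy $d_\pi$ exists (ergodicity), that the Fubini interchange is legitimate for the kernel $\Pi$ built from the Dirac-contaminated law $\mathbb{P}_\epsilon$, and, most importantly for the claim's full generality, that restricting attention to stationary policies loses nothing against arbitrary trajectory-dependent adversaries. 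This last reduction is the real crux and is precisely where I would lean on the standard average-reward MDP theory cited around \eqref{eq:lp_inf}; everything downstream of it is the routine Lagrangian manipulation above.
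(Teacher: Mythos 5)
Your proposal is correct and follows essentially the same route as the paper: weak duality for the LP \eqref{eq:lp_inf}, with $\lambda$ as the Lagrange multiplier on the stationarity constraint, the Fubini-style regrouping of the Lagrangian into $\ExP{\bftheta'\sim\Pi(\cdot|\bftheta,\bfzadv)}{\lambda(\bftheta')}+\ell_{\textup{adv}}(\bftheta)-\lambda(\bftheta)$, and a final bound of the expectation by the pointwise supremum over $\Thetaspace\times\Aspace$. If anything, your version is marginally more careful than the paper's, since you bound the expectation of the integrand directly (needing no compactness or attainment) and you explicitly flag the reduction from arbitrary dynamic adversaries to the stationary LP --- a step the paper defers to its ergodicity assumption and the cited average-reward MDP literature rather than proving.
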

\begin{proof}
Follows by weak duality for the LP
\eqref{eq:lp_inf}. Detailed proof in Appendix~\ref{appendix:proofs}.
\end{proof}

If strong duality holds \cite{nash1987linear, clark2003infinite},
we further have that the optimal value of \eqref{eq:lp_inf} is exactly equal to
\begin{align}
\hspace{-.18in}
{\tiny{\inf_{\lambda: \Thetaspace \mapsto \R} \sup_{\substack{\bftheta \in \Thetaspace \\ \bfzadv \in \Aspace}} \ExP{\bftheta^\prime \sim \Pi(\cdot | \bftheta, \bfzadv)}{\lambda\br{\bftheta^\prime}} + \ell_{\textup{adv}}\br{\bftheta}-\lambda\br{\bftheta}. }}\label{eq:certificate}
\end{align}

\section{MEAN ESTIMATION}\label{sec:mean_estimation}
Consider the mean estimation problem, where we aim to learn the parameter $\bftheta \in \R^d$ to estimate the mean $\mu = \ExP{\bfz \sim \Pdata}{\bfz}$ of a distribution $\Pdata$. Given a data point $\bfz_t$, the learning rule is given by:
\begin{align}\label{eq:mean_update}
    \bftheta_{t+1} \gets (1 - \eta)\bftheta_t + \eta \bfz_t +  \eta \bfB \bfw_t,
\end{align}
where $\eta$ is the learning rate, $\bfS = \bfB \bfB^\top \in \mathbb{S}^d_{+}$ is the tunable defense hyperparameter and $\bfw_t \sim \mathcal{N}(0, \bfI)$ is Gaussian noise. The adversary wants to maximize its average reward according to the following objective function:
\begin{align}
\ell_{\textup{adv}}\br{\bftheta} = \norm{\mu-\bftheta}^2.
\end{align}


\paragraph{Certificate on adversarial loss (analysis).}

\begin{restatable}{theorem}{meancertificate}
Choosing $\lambda : \R^d \rightarrow \R$ in Theorem~\ref{thm:certificate} to be quadratic, i.e.  $\lambda\br{\bftheta} = \bftheta^\top \bfA \bftheta + \bftheta^\top \bfb$, the adversarial constraint set of the form $\|\bfzadv - \mu\|_2^2 \leq r$, 
the certificate for the mean estimation problem for $\Pdata(\bfz) = \mathcal{N}(\bfz | \mu, \bfSigma)$
for a fixed learning algorithm (i.e. $\bfS$ is fixed) is given by:
\begin{align}
    \inf_{\bfA \in \mathbb{S}^d, \bfb \in \R^d, \nu \geq 0} g(\bfA, \bfb,  \nu, \bfS, \mu, \bfSigma),\label{eq:mean_certificate}
\end{align}
where $g(\bfA, \bfb,  \nu, \bfS, \mu, \bfSigma)$ is a convex objective in $\bfA,\bfb,\nu$  as defined below:
\begin{equation}
    \begin{cases}
    \frac{1}{4} \left\|\bmat{2(1 - \epsilon)\eta(1 - \eta)\bfA \mu - 2 \mu - \eta \bfb \\ \epsilon \eta \bfb + 2 \nu \mu}\right\|^2_{\bfD} \\
    \;\;+ (1-\epsilon)(\eta^2 \Trace(\bfSigma \bfA) + \eta^2 \mu^\top \bfA \mu + \eta \bfb^\top \mu)\hspace*{-3mm}\\
    \qquad + \mu^\top \mu + \eta^2 \Trace(\bfA\bfS) + \nu(r - \mu^\top \mu),\\
    \qquad\qquad \text{if }\nu \geq 0 \text{ and } \bfD \succeq 0 \\
    -\infty\qquad\; \text{otherwise,}
    \end{cases}\label{eq:dualfunction}
\end{equation}
where $\bfD = \bmat{ (1 - (1 - \eta)^2)\bfA - \bfI & -\eta\epsilon(1 - \eta)\bfA \\ -\eta\epsilon(1 - \eta)\bfA &  -\epsilon\eta^2 \bfA - \nu \bfI}$ and $\|\bfx\|^2_{\bfD} = \bfx^\top \bfD^{-1} \bfx$.
\end{restatable}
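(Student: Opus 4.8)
The plan is to specialize the general dual certificate of Theorem~\ref{thm:certificate} to the quadratic test function $\lambda(\bftheta) = \bftheta^\top\bfA\bftheta + \bftheta^\top\bfb$ and then evaluate, in closed form, the resulting finite-dimensional $\sup$ over $(\bftheta,\bfzadv)$. Concretely I would proceed in four stages: (i) compute the inner expectation $\ExP{\bftheta^\prime\sim\Pi(\cdot|\bftheta,\bfzadv)}{\lambda(\bftheta^\prime)}$ explicitly using the mixture structure of $\mathbb{P}_{\epsilon}$ and Gaussian second-moment formulas, turning the objective into an explicit quadratic in the joint variable $\bfv = \bmat{\bftheta \\ \bfzadv}$; (ii) relax the constraint $\norm{\bfzadv-\mu}_2^2\le r$ by a Lagrange multiplier $\nu\ge0$; (iii) carry out the resulting unconstrained quadratic maximization in $\bfv$ by completion of squares; and (iv) verify convexity of the resulting objective $g$ in $(\bfA,\bfb,\nu)$. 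Because Theorem~\ref{thm:certificate} is valid for \emph{any} $\lambda$ (weak LP duality) and Lagrangian relaxation only produces upper bounds, every stage preserves the ``$\le$'' direction, so the final infimum is a valid certificate regardless of whether strong duality holds.

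For stage (i), I would write the post-update parameter as $\bftheta^\prime = (1-\eta)\bftheta + \eta\bfz + \eta\bfB\bfw$ with $\bfw\sim\mathcal{N}(0,\bfI)$ and $\bfz \sim \mathbb{P}_{\epsilon}(\bfzadv)$, and split the expectation over the two branches of the contamination mixture. On the adversarial branch (probability $\epsilon$) the update has conditional mean $(1-\eta)\bftheta+\eta\bfzadv$ and covariance $\eta^2\bfS$; on the benign branch (probability $1-\epsilon$) it has mean $(1-\eta)\bftheta+\eta\mu$ and covariance $\eta^2\bfSigma + \eta^2\bfS$. Using $\ExP{}{\bfu^\top\bfA\bfu} = \bar{\bfu}^\top\bfA\bar{\bfu} + \Trace\br{\bfA\,\mathrm{Cov}(\bfu)}$ on each branch produces exactly the trace contributions $\eta^2\Trace(\bfA\bfS)$ (appearing on both branches and hence un-weighted) and $(1-\epsilon)\eta^2\Trace(\bfSigma\bfA)$, together with the mean-dependent quadratics; adding $\ell_{\textup{adv}}(\bftheta) = \norm{\mu-\bftheta}^2$ and subtracting $\lambda(\bftheta)$ then gives a fully explicit quadratic in $(\bftheta,\bfzadv)$.

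For stages (ii)--(iii), after subtracting $\nu\br{\norm{\bfzadv-\mu}_2^2 - r}$ I would organize the objective as $\bfv^\top\bfM\bfv + \bfc^\top\bfv + d$. Collecting coefficients gives the linear vector $\bfc = \bmat{2(1-\epsilon)\eta(1-\eta)\bfA\mu - 2\mu - \eta\bfb \\ \epsilon\eta\bfb + 2\nu\mu}$ and the constant $d$ equal to the non-norm terms listed in \eqref{eq:dualfunction}. The unconstrained $\sup_{\bfv}$ of this quadratic is finite iff $\bfM\preceq0$, and setting $\bfD = -\bfM$ the maximum equals $\tfrac14\bfc^\top\bfD^{-1}\bfc + d = \tfrac14\norm{\bfc}^2_{\bfD} + d$; otherwise it is $+\infty$ and that choice of dual variables is discarded by the outer infimum (this is the degenerate case in \eqref{eq:dualfunction}). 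Reading the block structure of $\bfD=-\bfM$ off the $\bftheta\bftheta$, $\bftheta\bfzadv$, and $\bfzadv\bfzadv$ coefficients reproduces the stated matrix: the $\bftheta\bftheta$ coefficient $(1-\eta)^2\bfA + \bfI - \bfA$ negates to the upper-left block $(1-(1-\eta)^2)\bfA-\bfI$, the adversarial-branch cross term $\epsilon\eta(1-\eta)\bfA$ gives the off-diagonal $-\eta\epsilon(1-\eta)\bfA$, and the $\bfzadv\bfzadv$ block is built from $\epsilon\eta^2\bfA$ and the multiplier term $\nu\bfI$; the requirement $\bfD\succeq0$ is exactly concavity of the inner maximization.

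Finally, for stage (iv), $g$ is convex in $(\bfA,\bfb,\nu)$ because, away from the degenerate region, it is the sum of terms affine in $(\bfA,\bfb,\nu)$ and the matrix-fractional term $\tfrac14\bfc^\top\bfD^{-1}\bfc$, which is jointly convex in $(\bfc,\bfD)$ on $\{\bfD\succ0\}$ (via its Schur-complement/SDP representation) precomposed with the affine maps $(\bfA,\bfb,\nu)\mapsto\bfc$ and $(\bfA,\nu)\mapsto\bfD$; the feasible set $\{\nu\ge0,\ \bfD\succeq0\}$ is convex. I expect the main obstacle to be the stage-(iii) bookkeeping together with the degenerate case of the completion of squares: when $\bfD$ is only positive \emph{semi}definite the maximizer need not exist and the clean identity $\tfrac14\bfc^\top\bfD^{-1}\bfc$ must be justified by a pseudoinverse/limiting argument (the $\sup$ is finite precisely when $\bfc\in\mathrm{range}(\bfD)$), which I would handle by perturbing to $\bfD\succ0$ and passing to the limit, or equivalently by appealing to the Schur-complement epigraph form that is insensitive to this subtlety.
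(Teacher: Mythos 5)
Your proposal is correct and follows essentially the same route as the paper's own proof: specialize to quadratic $\lambda$, evaluate the Gaussian-mixture expectation (producing exactly the $\eta^2\Trace(\bfA\bfS)$ and $(1-\epsilon)\eta^2\Trace(\bfSigma\bfA)$ terms), dualize $\|\bfzadv-\mu\|_2^2\le r$ with a multiplier $\nu\ge 0$, complete the square to obtain the matrix-fractional dual, and invoke joint convexity of the matrix-fractional function in $(\bfc,\bfD)$ composed with affine maps; your pseudoinverse/limiting treatment of a singular $\bfD$ is a refinement the paper omits. One remark: carried out carefully, your stage (iii) yields $-\epsilon\eta^2\bfA+\nu\bfI$ (not $-\nu\bfI$) in the lower-right block of $\bfD$, since subtracting $\nu\bigl(\|\bfzadv-\mu\|_2^2-r\bigr)$ adds $+\nu\bfI$ to the negated Hessian, and this is the correct sign --- with the paper's stated $-\nu\bfI$ the constraint $\bfD\succeq 0$ is infeasible (the upper-left block forces $\bfA\succ 0$ while the lower-right would force $\bfA\preceq 0$) --- so your derivation is sound, but it does not literally ``reproduce the stated matrix,'' which contains a sign typo.
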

\begin{proof}
    The detailed proof can be found in Appendix~\ref{appendix:mean_estimation}. The proof sketch follows by noting that for a fixed $\lambda$ the dual of the inner constrained maximization problem is a quadratic expression in $\bfzadv, \bftheta$ and a finite supremum exists if the Hessian is negative semidefinite (which leads to the $\mathbf{D} \succeq 0$ constraint). Plugging in the maximizer we get a minimization problem in the dual variables $\nu$. Finally we note that the overall minimization problem is jointly convex in $\bfA, \bfb, \nu$.
\end{proof}
\begin{remark}
The problem above
is a convex problem since it has a matrix fractional objective function \cite{boyd2004convex}
with a Linear Matrix Inequality (LMI) constraint.
\end{remark}

\paragraph{Meta-Learning Algorithm.}
Following the formulation in Eq.~\eqref{eq:main_formulation}, we wish to learn a defense parameter $\bfS$ that minimizes the expected loss (expectation over different $\Pdata$ from the meta distribution $\mathcal{P}$). For the mean estimation problem this boils down to solving
\begin{align}
    \inf_{\substack{\bfS \in \mathbb{S}^d_+ }} &\eta^2 \Trace(\bfS)  + \kappa \mathbb{E}_{\mu, \bfSigma \sim \mathcal{P}} [ \!\!\!\! \inf_{\substack{\nu \geq 0 \\ \bfA \in \mathbb{S}^d, \bfb \in \R^d}} \!\!\!\! g(\bfA, \bfb,  \nu, \bfS, \mu, \bfSigma)].\label{eq:mean_meta_defense}
\end{align}
\vspace{-.05in}
\begin{remark}
    Note that problem \eqref{eq:mean_meta_defense} is not jointly convex in $\bfS,\bfA,\bfb,\nu$ because of the $\Trace(\bfA\bfS)$ term in $g$; see \eqref{eq:dualfunction}. However, it is convex individually in $\bfS$ and $\{\bfA,\bfb,\nu\}$. We use an alternating minimization approach to seek a local minimum of problem~\eqref{eq:mean_meta_defense}, as detailed in Algorithm~\ref{alg:meta_mean}.
\end{remark}

In practice, one observes a finite number of distributions from $\mathcal{P}$, and sample average approximation is leveraged, with the aim of learning a defense parameter which generalizes well to unseen distributions from $\mathcal{P}$. This process is stated in Algorithm~\ref{alg:meta_mean}.

\begin{algorithm}[t]
    \caption{Meta learning a robust learning algorithm for mean estimation}
    \begin{algorithmic}[1]
        \STATE \textbf{Input: } Set of $K$ distributions $\{\mathcal{N}(\mu_i, \bfSigma_i)\}_{i \in [K]}$ sampled from $\mathcal{P}[\Pdata]$, tradeoff parameter $\kappa$, and max iterations $T$.
        \STATE \textbf{Initialize:} $\bfS^{(1)} \in \mathbb{S}^d_+$ randomly.
        \STATE \textit{Alternating Minimization over Lagrange 
        multipliers $\{\bfA_i, \bfb_i, \nu_i\}_{i \in [K]}$ and defense parameter $\bfS$.}
        \FOR{$t=1, \ldots, T$}
        \FOR{$i=1,\ldots, K$}
        \STATE $\bfA_i, \bfb_i, \nu_i = \underset{\bfA \in \mathbb{S}^d, \bfb \in \R^d, \nu \geq 0}{\argmin} g(\bfA, \bfb, \nu, \bfS^{(t)}, \mu_i, \bfSigma_i)$
         \ENDFOR
        \STATE $\bfS^{(t+1)} = \underset{\bfS \in \mathbb{S}^d_+}{\argmin}\bigl(\frac{\kappa}{K} \sum_{i \in [K]} g(\bfA_i, \bfb_i, \nu_i, \bfS, \mu_i, \bfSigma_i)$ \\[-3mm]
        \hspace{5cm}$+  \eta^2 \Trace(\bfS)\bigr)$
        \ENDFOR
        \RETURN $\bfS^{(T+1)}$
    \end{algorithmic}
    \label{alg:meta_mean}
\end{algorithm}

\section{BINARY CLASSIFICATION}\label{sec:binary_classification}
We consider the binary classification problem. Given an input feature $\bfx \in \R^d$ such that $\|\bfx\|_2 \leq 1$, a linear predictor $\bftheta \in  \R^d$ tries to predict the label $y \in \{-1, 1\}$ via the sign of $\bftheta^\top \bfx$ . To define the losses, we introduce $\bfz = y \bfx \in \R^d$ which is the label multiplied by the feature and note that $\|\bfz\|_2 \leq 1$.

A dynamic adversary tries to corrupt samples so that the learning algorithm learns a $\bftheta$ that maximizes the hinge loss on a target distribution $\Ptarget$, captured by the following objective:
\begin{align}\label{eq:adv_classification}
    \ell_{\textrm{adv}}(\bftheta) = \ExP{z \sim \Ptarget}{\max \{0, 1 - \bftheta^\top \bfz\}}.
\end{align}
%
The learning algorithm tires to minimize the regularized hinge loss on the observed datapoints:
\begin{align}\label{eq:hinge}
    l(\bftheta, \bfz) &= \max \{0, 1 - \bftheta^\top \bfz\} + \frac{\sigma}{2}\norm{\bftheta}_2^2.
\end{align}
Upon observing a sample $\bfz_t$, the parameter is updated via a gradient descent: $\bftheta_{t+1} = F(\bftheta_t, \bfz_t)$, where \begin{align}
    F(\bftheta, \bfz) &= \bftheta_{t} - \eta \nabla_{\bftheta} l(\bftheta_t, \bfz_t) \notag \\
    &= (1 - \sigma \eta)\bftheta + \eta \mathbb{I}[\bftheta^\top \bfz \leq 1] \bfz. \label{eq:hinge_online_learning}
\end{align}
Below, we provide a certificate for the adversarial objective at stationarity of this learning algorithm.
\begin{restatable}{theorem}{classificationhingecertificate}
\label{thm:classificationhingecertificate}
Choosing $\lambda : \R^d \rightarrow \R$ in Theorem~\ref{thm:certificate} to be quadratic, i.e., $\lambda\br{\bftheta} = \bftheta^\top \bfA\bftheta + \bfb^\top \bftheta$, parameter space $\Thetaspace = \{ \bftheta \mid \|\bftheta\|_2 \leq \frac{1}{\sigma}\}$ and the adversarial constraint set of the form $\mathcal{A} = \{ \bfzadv \mid \|\bfzadv\|_{2    }
\leq 1\}$, the certificate for the binary classification problem for $\Pdata(\bfz) = \{\bfz_1, \ldots, \bfz_N\}$ for a learning algorithm with regularization parameter $\sigma$, and learning rate $\eta$ is upper bounded by
\begin{align*}
    &\max \left(\OPT_1, \OPT_2 \right)\\[3mm]
\OPT_1 = \inf_{\nu, \bfA, \bfb} \quad& \|\bfp(\bfb, \nu)\|^2_{\bfD(\bfA, \nu)^{-1}} + q(\nu)\\
\textup{s.t.}  \quad& \bfD(\bfA, \nu) \succeq 0,\\
&\bfr_i(\nu) + \bfs(z_i, \bfA, \bfb) = 0, \;\; \forall i \in [N].\\[3mm]
\OPT_2 = \inf_{\nu, \bfA, \bfb} \quad& \|\bfp'(\bfb, \nu)\|^2_{\bfD'(\bfA, \nu)^{-1}} + q'(\nu)\\
\textup{s.t.}  \quad&  \bfD'(\bfA, \nu) \succeq 0,\\
      &\bfr_i(\nu) + \bfs(z_i, \bfA, \bfb) = 0, \;\; \forall i \in [N].
\end{align*}

where $\bfp()$, $\bfp'()$, $q()$, $q'()$, $\bfD()$, $\bfD'()$, $\bfr_1(),\ldots,\bfr_N()$, $\bfs()$ are affine functions of the optimization variables $\nu, \bfA, \bfb$ as defined below and $\nu = \{\nu_1, \ldots, \nu_{10}\}$:
\footnotesize
\begin{align*}
    \bfp(\bfb, \nu) &= \frac{1}{2}\bmat{-\sigma \eta \bfb + \sum_{i \in [N]} \left((\nu_{1i} - \nu_{2i}) \bfz_i - \nu_{4i} +\nu_{6i} \right) \\ \epsilon \eta \bfb},\\
    \bfD(\bfA, \nu) &= \addtolength{\arraycolsep}{-1mm}\bmat{[1 - (1 - \sigma \eta)^2] \bfA + \nu_8 \bfI & -\epsilon (1 - \sigma \eta) \eta \bfA + \nu_9 \bfI\\ -\epsilon (1 - \sigma \eta) \eta \bfA + \nu_9 \bfI & -\epsilon \eta^2 \bfA + \nu_{10}\bfI},\\
    q(\nu) &= q'(\nu) + 2 \nu_9 + \nu_{10}, \\
    \bfp'(\bfb, \nu) &= \frac{1}{2}\bmat{-\sigma \eta \bfb + \sum_{i \in [N]} \left((\nu_{1i} - \nu_{2i}) \bfz_i - \nu_{4i} +\nu_{6i} \right)},\\
    \bfD'(\bfA, \nu) &= [1 - (1 - \sigma \eta)^2] \bfA + \nu_8 \bfI,\\
    q'(\nu) &=-\nu_1^\top \mathbf{1} + (2    + \tfrac{1}{\sigma}) \nu_2^\top \mathbf{1} + \tfrac{(\nu_4 + \nu_6)^\top \bf1}{\sigma} + \mathbf{1}^\top \nu_7  + \frac{\nu_{8}}{\sigma^2}, \\
    \bfr_i(\nu) &= \bmat{(1 + \frac{1}{\sigma}) (\nu_{1i} - \nu_{2i})  - \frac{\mathbf{1}^\top(\nu_{4i} + \nu_{6i})}{\sigma} - \nu_{7i}\\ \nu_{3i} + \nu_{4i} - \nu_{5i} - \nu_{6i}},\\
    \bfs(\bfz_i, \bfA, \bfb) &= \frac{1}{N} \bmat{(1 - \epsilon) \eta^2 \bfz_i^\top \bfA \bfz_i + (1 - \epsilon) \eta \bfb^\top \bfz_i + 1\\ 2(1 - \epsilon) \eta (1 - \sigma \eta) \bfA \bfz_i - \bfz_i}.
\end{align*}
\end{restatable}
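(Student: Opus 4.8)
The plan is to specialize the general certificate of Theorem~\ref{thm:certificate} to the classification dynamics \eqref{eq:hinge_online_learning} under the quadratic ansatz $\lambda(\bftheta)=\bftheta^\top\bfA\bftheta+\bfb^\top\bftheta$, and then to reduce the resulting inner supremum to the stated convex program by dualizing a nonconvex QCQP. First I would write the transition kernel explicitly: since $\bfz\sim\mathbb{P}_\epsilon(\bfzadv)$ places mass $\epsilon$ on $\bfzadv$ and mass $(1-\epsilon)$ on the empirical data $\{\bfz_1,\dots,\bfz_N\}$, the inner expectation splits as $\ExP{\bftheta'\sim\Pi}{\lambda(\bftheta')}=\epsilon\,\lambda(F(\bftheta,\bfzadv))+\tfrac{1-\epsilon}{N}\sum_{i=1}^N\lambda(F(\bftheta,\bfz_i))$ with $F(\bftheta,\bfz)=(1-\sigma\eta)\bftheta+\eta\,\mathbb{I}[\bftheta^\top\bfz\le1]\bfz$. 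Substituting this together with $\ell_{\textup{adv}}(\bftheta)$ and $-\lambda(\bftheta)$ into \eqref{eq:cert_general} produces a supremum over $(\bftheta,\bfzadv)\in\Thetaspace\times\Aspace$ of a function that is piecewise quadratic because of the $N+1$ indicators (one adversarial, $N$ for the data).

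The structural observation that makes this tractable is that the ``gain'' from including $\bfz_i$ in the update, namely $\lambda((1-\sigma\eta)\bftheta+\eta\bfz_i)-\lambda((1-\sigma\eta)\bftheta)$, is \emph{affine} in $\bftheta$: the $\bftheta^\top\bfA\bftheta$ terms cancel, leaving $\eta^2\bfz_i^\top\bfA\bfz_i+2\eta(1-\sigma\eta)\bftheta^\top\bfA\bfz_i+\eta\bfb^\top\bfz_i$. Hence each data indicator $\mathbb{I}[\bftheta^\top\bfz_i\le1]$ gates only an affine function of $\bftheta$; adding the adversarial hinge on the target points (which contributes the constant $1$ and the linear $-\bfz_i$ and shares the same breakpoints) shows that the affine coefficient gated by each indicator is exactly the two-block vector $\bfs(\bfz_i,\bfA,\bfb)$ (scalar block $=$ constant part, vector block $=$ gradient in $\bftheta$). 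By contrast, the single adversarial indicator $\mathbb{I}[\bftheta^\top\bfzadv\le1]$ gates whether $\bfzadv$ enters the \emph{quadratic} part at all: I would therefore split into two regimes and take the maximum, giving $\max(\OPT_1,\OPT_2)$. When it is active, $\bfzadv$ couples with $\bftheta$ and produces the $2\times2$ block Hessian $\bfD(\bfA,\nu)$ and the two-block vector $\bfp(\bfb,\nu)$ (case $\OPT_1$); when inactive, $\bfzadv$ drops from the objective, leaving the single-block $\bfD'(\bfA,\nu)$ and $\bfp'(\bfb,\nu)$ over $\bftheta$ alone (case $\OPT_2$).

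With the data gains linearized, each regime is a maximization of a quadratic in $(\bftheta,\bfzadv)$ with $N$ indicator-gated affine corrections, subject to the quadratic constraints $\|\bftheta\|_2\le1/\sigma$ and $\|\bfzadv\|_2\le1$. I would then take the Lagrangian dual (an S-procedure relaxation), introducing per-point multipliers $\nu_{1i},\dots,\nu_{7i}$ (some scalars, some vectors) to encode the region logic of each data indicator, and $\nu_8,\nu_9,\nu_{10}$ for the norm constraints, which enter the diagonal and cross blocks of $\bfD$. Weak duality for this nonconvex QCQP is all that is needed, since Theorem~\ref{thm:certificate} only asks for an upper bound. A finite supremum of the quadratic exists only when its Hessian is negative semidefinite, which under the standard sign convention is the LMI $\bfD\succeq0$ (respectively $\bfD'\succeq0$); substituting the stationary point yields the matrix-fractional value $\|\bfp\|^2_{\bfD^{-1}}+q(\nu)$ (resp.\ $\|\bfp'\|^2_{\bfD'^{-1}}+q'(\nu)$), while stationarity with respect to the linearized data terms forces the equality constraints $\bfr_i(\nu)+\bfs(\bfz_i,\bfA,\bfb)=0$ for all $i$. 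The remaining bookkeeping is to check that $q$ and $q'$ accumulate the right constants (the $1/\sigma$ and $1/\sigma^2$ terms from the $\|\bftheta\|_2\le1/\sigma$ constraint, the per-point constants from the adversarial hinge, and the extra $2\nu_9+\nu_{10}$ from the adversarial norm constraint in the active case) and to invoke convexity of the matrix-fractional objective under the LMI, as in the analogous remark for mean estimation (cf.\ \citep{boyd2004convex}).

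The main obstacle I anticipate is the treatment of the $N$ data indicators. Unlike the lone adversarial indicator they cannot be enumerated (that would cost $2^N$), so the whole reduction rests on the cancellation above — which turns each gated term affine — together with a complementarity encoding that is (i) sound, i.e.\ yields a genuine upper bound on the piecewise-quadratic objective for every $\bftheta$, and (ii) compatible with the QCQP dualization so that it collapses to the clean per-point constraints $\bfr_i(\nu)+\bfs(\bfz_i,\bfA,\bfb)=0$ rather than something intractable. Getting the multiplier bookkeeping right (which of $\nu_{1i},\dots,\nu_{7i}$ are scalars versus vectors and their signs) and confirming that the correct affine coefficients are absorbed into $\bfp$ and $q$ is where the real work lies; the rest is the routine, if lengthy, algebra of completing the square and reading off $\bfp,\bfD,q$ in each of the two cases.
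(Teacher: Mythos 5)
Your high-level route is the same as the paper's: specialize Theorem~\ref{thm:certificate} with the quadratic $\lambda$, split on the single adversarial indicator $\mathbb{I}[\bftheta^\top\bfzadv\le 1]$ to get $\max(\OPT_1,\OPT_2)$, and dualize the resulting constrained maximization into a matrix-fractional program with an LMI. Your structural observation that the quadratic terms cancel in $\lambda\bigl((1-\sigma\eta)\bftheta+\eta\bfz_i\bigr)-\lambda\bigl((1-\sigma\eta)\bftheta\bigr)$, so each data indicator gates only an affine function of $\bftheta$, is also correct and is indeed what makes the reduction possible.

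However, there is a genuine gap exactly where you say ``the real work lies'': you never construct the encoding of the $N$ benign-data indicators, and without it the dualization does not produce the stated program. The paper's mechanism has two specific ingredients you are missing. First, each indicator is replaced by an auxiliary variable $q_i\in\{0,1\}$ tied to $\bftheta$ through big-M-type linear constraints, $1-\bftheta^\top\bfz_i\le(1+\tfrac{1}{\sigma})q_i$ and $1-\bftheta^\top\bfz_i\ge-(1+\tfrac{1}{\sigma})(1-q_i)$ (the constant $1+\tfrac{1}{\sigma}$ is valid precisely because regularization keeps $\|\bftheta\|_2\le\tfrac{1}{\sigma}$), and then $q_i$ is relaxed to $[0,1]$. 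Second --- and this is the step your ``Lagrangian dual / S-procedure'' sketch cannot bypass --- the objective now contains bilinear terms $q_i\,\bftheta^\top\bfA\bfz_i$ (and $q_i\,\bftheta^\top\bfz_i$), so a direct Lagrangian dual of the relaxed problem would require taking a supremum of a bilinear function, which does not yield a finite closed form. The paper convexifies these products with McCormick envelopes, introducing $\bfw_i\approx q_i\bftheta$ with four linear envelope constraints per point; only after this is the problem a concave QP (quadratic in $(\bftheta,\bfzadv)$, linear in $(q_i,\bfw_i)$) to which exact QP duality applies. The multipliers $\nu_{3i},\nu_{4i},\nu_{5i},\nu_{6i}$ appearing in $\bfr_i(\nu)$ are exactly the envelope multipliers, and the per-point constraints $\bfr_i(\nu)+\bfs(\bfz_i,\bfA,\bfb)=0$ arise because $(q_i,\bfw_i)$ enter the objective linearly --- not from ``stationarity with respect to the linearized data terms'' as you assert. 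A smaller bookkeeping error: $\nu_9$ is not a norm-constraint multiplier; it is the multiplier of the region constraint $\bftheta^\top\bfzadv\le 1$ defining the active case, which is why it sits in the off-diagonal block of $\bfD$ and contributes $2\nu_9$ to $q(\nu)$.
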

\begin{proof}
A detailed derivation can be found in the Appendix~\ref{appendix:binary_classification}. The proof steps are: (i) Regularization implicitly bounds the decision variable $\bftheta \in \Thetaspace$ without the need for projections, (ii) Considering 2 cases for the indicator term of $\bfzadv$ and taking the maximum of these 2 cases, (iii) Relaxing the indicator terms for benign samples into continuous variables between $[0, 1]$, and (iv) Using McCormick relaxations \citep{mitsos2009mccormick} to convexify the bilinear terms in the objective.
\end{proof}
\textbf{Extension to Multi-Class Setting:} Our analysis can be extended to the multi-class classification setting. Let us consider score based classifiers, where $\Theta = \{\theta_1, \ldots, \theta_K\} \in \mathbb{R}^d$ are the learnable parameters and the class prediction for a feature $x \in \mathbb{R}^d$ is given by $\arg\max_{i \in [K]}\theta_i^\top x$.

The SVM loss for any arbitrary feature $x$ with label $y$ is defined as:
$\ell(\Theta, (x, y)) = \sum_{j \neq y} \max\{0, 1 + (\theta_j - \theta_y)^\top x\}$

The gradient update takes the form:
$F(\theta_y, (x, y)) = \theta_y + \eta \sum_{j \neq y} \mathbb{I}[1 + (\theta_j - \theta_y)^\top x > 0] x$ and for all $j \neq y$, we have $F(\theta_j, (x, y)) = \theta_j - \eta \mathbb{I}[1 + (\theta_j - \theta_y)^\top x > 0] x$.

Note that the non-linearity in both the loss function and the update is composed with a linear combination of the parameters (i.e. $\theta_j - \theta_y$), and thus the analysis in the proof of Theorem~\ref{thm:classificationhingecertificate} still holds, and our analysis for the binary classification generalizes to the multi-class classification.

\section{EXPERIMENTS}\label{sec:experiments}
We conduct experiments on both synthetic and real data to empirically validate our theoretical tools.
\begin{figure}[h]
    \centering
    \begin{minipage}{0.32\textwidth}
        \centering
        \includegraphics[width=\textwidth]{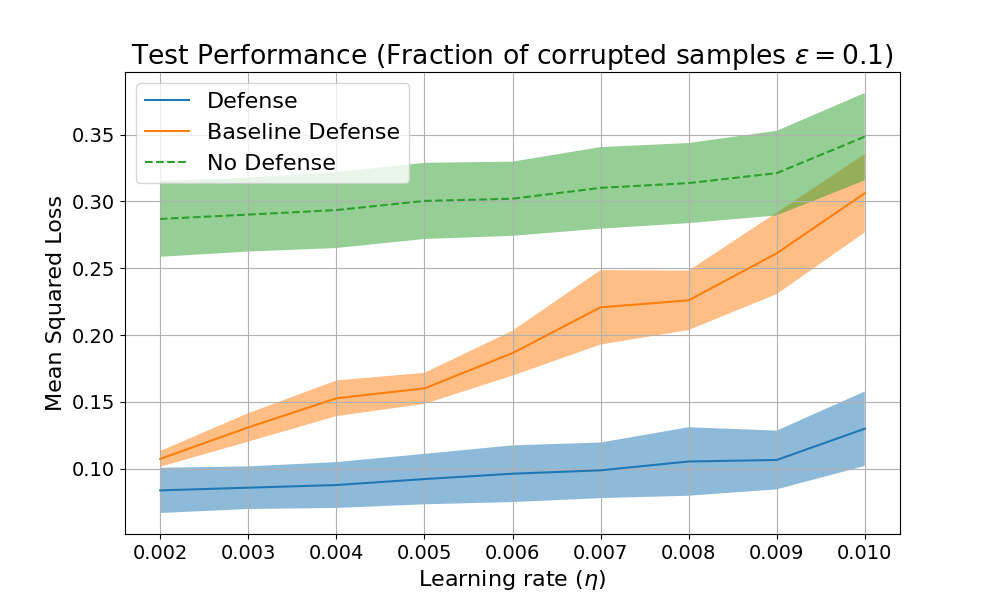} 
        \label{fig:image1}
    \end{minipage}
    \vskip -0.2in
    \begin{minipage}{0.32\textwidth}
        \centering
        \includegraphics[width=\textwidth]{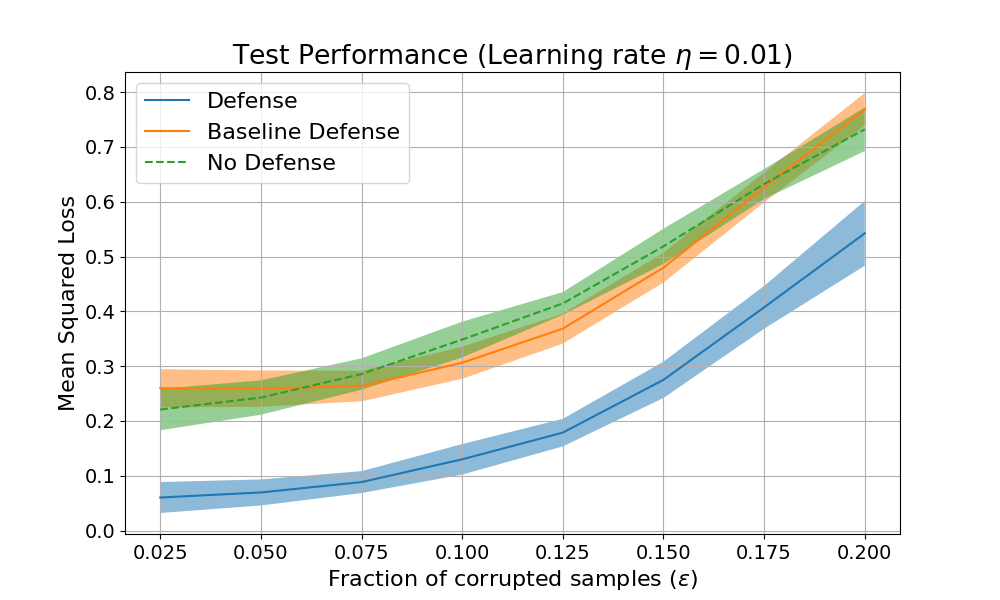} 
        \label{$d=3$}
    \end{minipage}
    \vskip -0.2in
    \caption{Test performance (mean squared error between true and estimated means) upon varying the learning rates (above) and the the fraction of samples corrupted by the dynamic adversary (below) and observed that our defense significantly outperforms training without defense.}
    \vskip -0.2in
    \label{fig:mean_estimation}
\end{figure}
\begin{figure*}[h!]
    \centering
    \begin{minipage}{0.32\textwidth}
        \centering
        \includegraphics[width=1.1\textwidth]{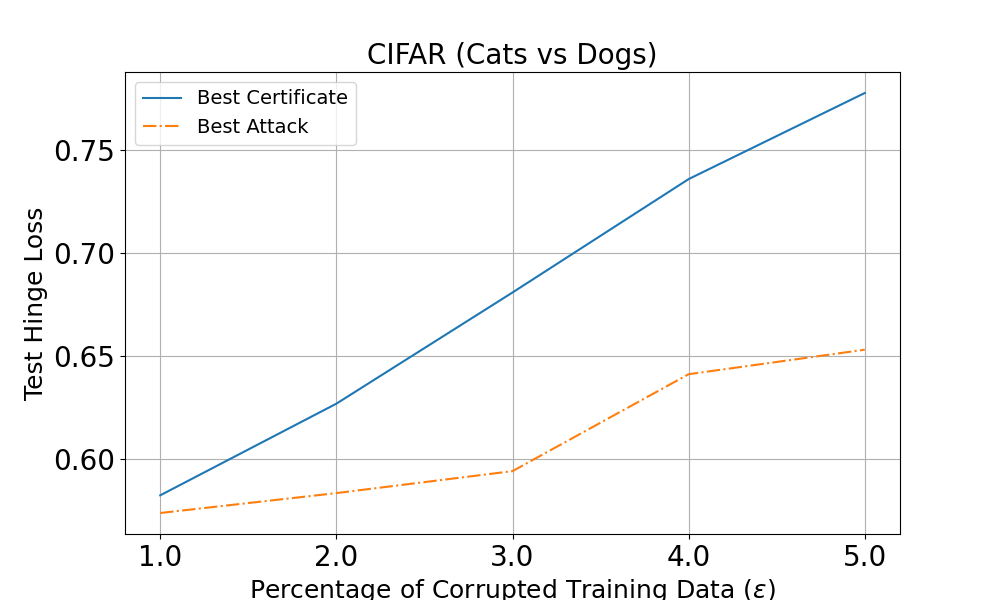} 
        \label{fig:image1}
    \end{minipage}
    \begin{minipage}{0.32\textwidth}
        \centering
        \includegraphics[width=1.1\textwidth]{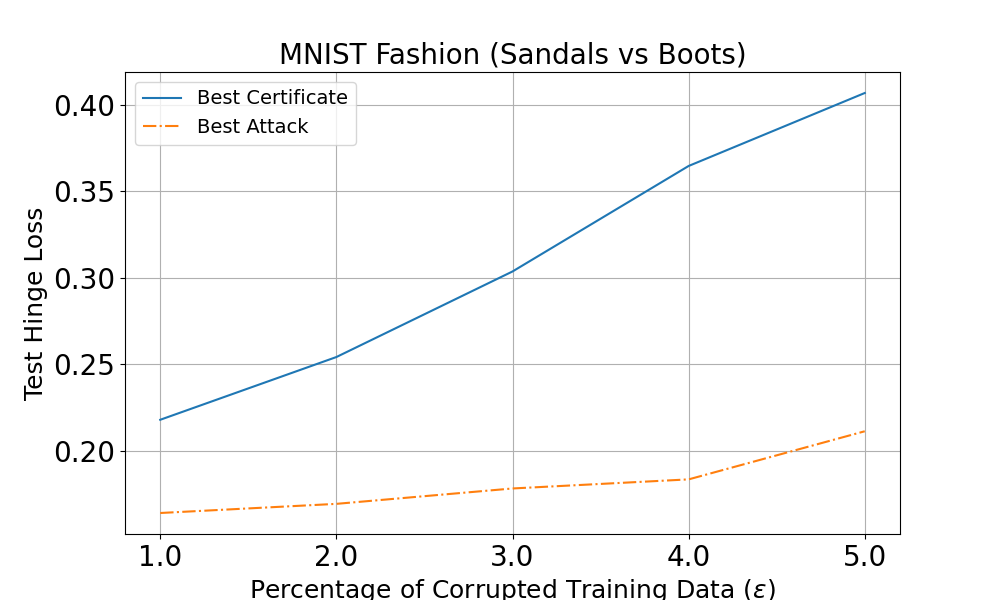} 
        \label{$d=3$}
    \end{minipage}
    \begin{minipage}{0.32\textwidth}
        \centering
        \includegraphics[width=1.1\textwidth]{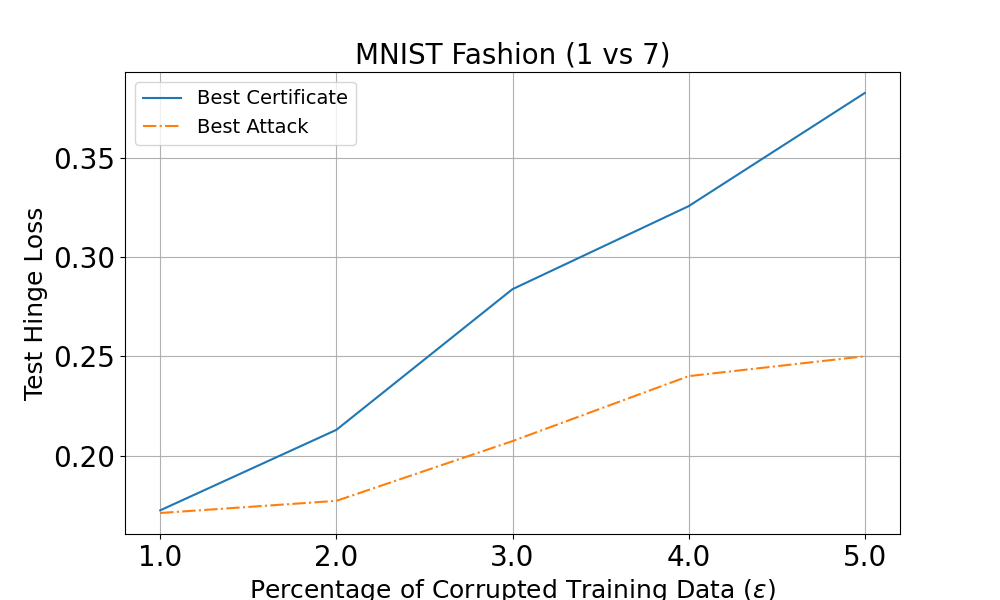} 
        \label{$d=3$}
    \end{minipage}
    \vskip -0.15in
    \caption{We plot the certificates of robustness for various settings (hyperparameter values) which act as upper bounds on the optimal dynamic adversary's objective. We also plot the test losses on the adversarial objective  for various attacks which act as lower bounds on the objective of the optimal adversary. }
    \vskip -0.2in
    \label{fig:classification}
\end{figure*}
\begin{figure}[h!]
    \centering
    \begin{minipage}{0.32\textwidth}
        \centering
        \includegraphics[width=\textwidth]{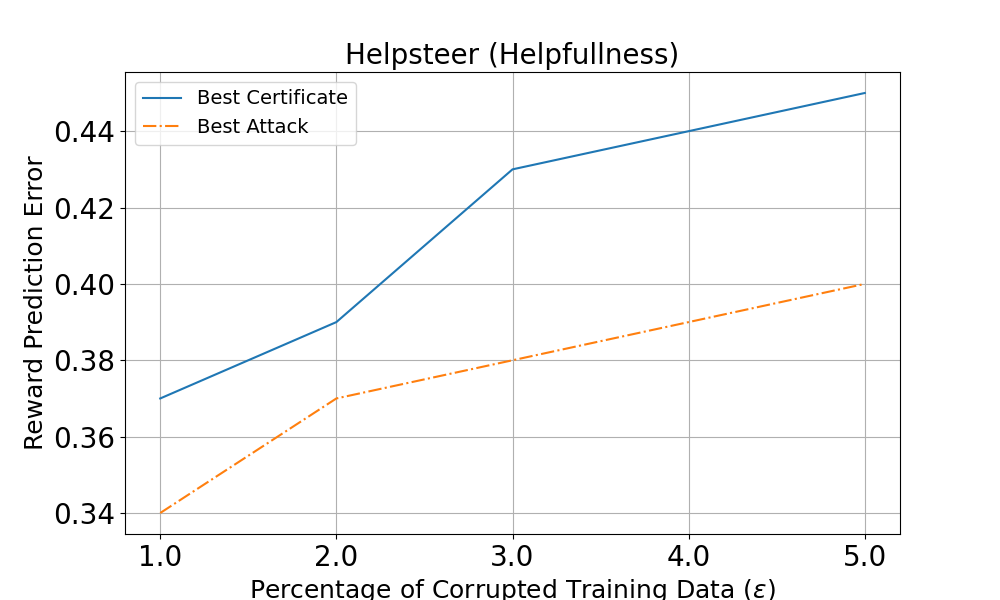} 
        \label{fig:image1}
    \end{minipage}
    \vskip -0.15in
    \begin{minipage}{0.32\textwidth}
        \centering
        \includegraphics[width= \textwidth]{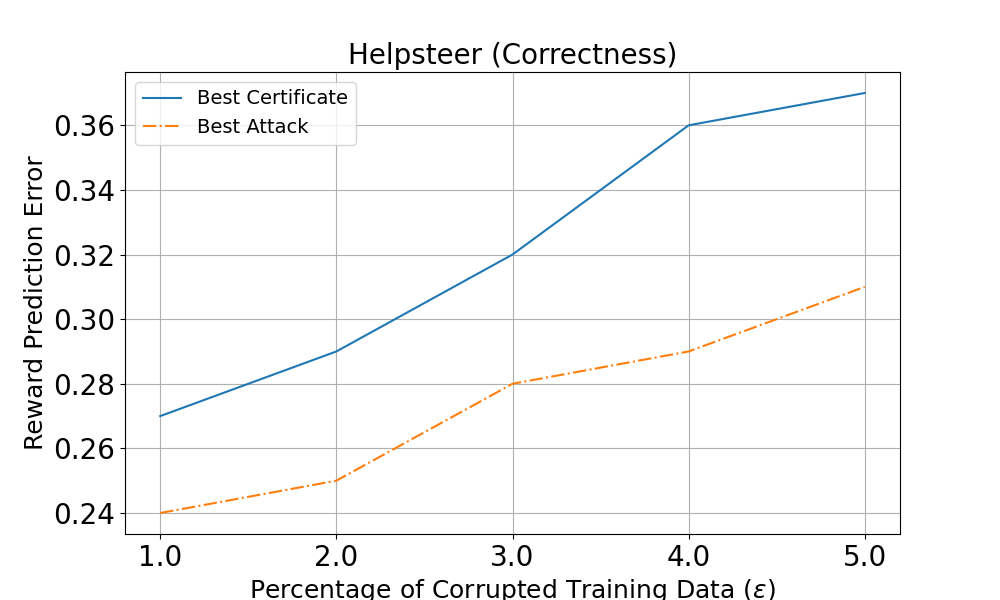} 
        \label{$d=3$}
    \end{minipage}
    \vskip -0.15in
    \caption{Poor choice of hyperparameters of the learning algorithm can make them vulnerable to dynamic attackers as noted by our certificates and attacks (red plots). Lower values of certificate, indicate more robust learning algorithms (blue, orange, green plots).}\label{fig:llm}
    \vskip -0.2in
\end{figure}
\subsection{Mean Estimation}
\vskip -0.15in
We wish to evaluate the robustness of our meta learning algorithm in Eq.~\eqref{eq:main_formulation} to design a defense against a dynamic best responding adversary on a ($d=20$) mean estimation task. We consider 3 different learning algorithms: 1. \textbf{No Defense:} Eq.~\eqref{eq:mean_update} with $\bfB=\mathbf{0}$, i.e. no additive Gaussian noise, 2. \textbf{Baseline Defense:} $\bfB$ in Eq.~\eqref{eq:mean_update} is restricted to be Isotropic Gaussian, 3. \textbf{Defense:} $\bfB$ in Eq.~\eqref{eq:mean_update} can be arbitrarily shaped. We use Algorithm~\ref{alg:meta_mean} to compute the defense parameter $\bfS = \bfB\bfB^\top$ for the latter 2 learning algorithms by training on 10 randomly chosen Gaussians drawn from standard Gaussian prior for the mean and standard Inverse Wishart prior for the covariance. We report the average test performance on 50 Gaussian distributions drawn from the same prior (see Figure~\ref{fig:mean_estimation}).



\subsection{Image Classification}
\vskip -0.15in
We consider binary classification tasks on multiple datasets : (i) MNIST \citep{deng2012mnist}, (ii) FashionMNIST \citep{xiao2017fashion}, (iii) CIFAR-10\citep{krizhevsky2009learning}. Detailed dataset descriptions and preprocessing steps can be found in Appendix~\ref{appendix:experiment}.


In our experiments, we choose $\Ptarget$ in Eq.~\eqref{eq:adv_classification} to be the same as $\Pdata$, i.e., the adversary's objective is to make the model perform poorly on the benign data distribution.

To learn the hyperparameters for a robust online learning algorithm (Eq.~\eqref{eq:hinge_online_learning}), we adopt the Meta learning setup presented in Eq.~\eqref{eq:main_formulation}. We choose $\mathcal{P}$ as a set of binary classification datasets on label pairs different from the label pair the online algorithm receives data from. For example, we use data from MNIST: 4 vs 9, 5 vs 8, 3 vs 8, 0 vs 6 to construct the objective in Eq.~\eqref{eq:main_formulation} and then test the performance of the onlne learning algorithm with the learnt hyperparameters on MNIST: 1 vs 7 (see Appendix~\ref{appendix:experiment} for more details). We compute certificates for different fractions of the training data to be corrupted $\epsilon \in \{1 \%, 2\%, 3\%, 4\%, 5\%\}$, and vary $\eta, \sigma$ for various values of $\eta$ and $\sigma$ by performing a grid search over $\eta \in \{5\times10^{-5}, 1\times10^{-4}, 5\times10^{-4}, 1\times10^{-3}, 5\times10^{-3}\}$ and $\sigma \in \{3\times10^{-3}, 6\times10^{-3}, 1\times10^{-2}, 3\times10^{-2}, 6\times10^{-2}\}$. We tune the hyperparameter $\kappa$ in Eq.~\eqref{eq:main_formulation} to trade off benign loss and robustness certificate by the generalization test loss on a held-out validation set.
In Figure~\ref{fig:classification} we plot the certificates of robustness corresponding to hyperparameters $(\eta, \sigma)$ pairs having smallest objective values in Eq.~\eqref{eq:main_formulation}.

Additionally, we evaluate the learning algorithm against three types of attacks:
1. \textbf{fgsm:} The attacker, having access to model parameters at each time step, initializes a random $\mathbf{z}_{\text{adv}}$, computes the gradient of $\bfzadv$ on the adversarial loss, takes a gradient ascent step on $\bfzadv$ and projects the result onto the unit $\ell_2$ ball, 2. \textbf{pgd:} similar to fgsm, but instead of a single big gradient step, the attacker takes multiple small steps and projects onto the unit $\ell_2$ ball, 3. \textbf{label flip:} the attacker picks an arbitrary data point and flips its label. In Figure~\ref{fig:classification}, we plot the best attack (corresponding to largest prediction error) for each value of $\epsilon$. We observe  the following:
The test adversarial loss under different heuristic attacks is consistently lower than the computed robustness certificate.

\subsection{Reward Learning}
\vskip -0.15in
We conduct reward learning experiments using Helpsteer, an open-source helpfulness dataset that is used to align large language models to become more helpful, factually correct and coherent, while being adjustable in terms of the complexity and verbosity of its responses \citep{wang2023helpsteer, dong2023steerlm}. Since the datasets of human feedback, both for open source and closed sourced models, are typically composed of users `in the wild` using the model, there is potential for adversaries to introduce poisoning. This can lead to the learned reward model favoring specific demographic groups, political entities or unscientific points of view, eventually leading to bad and potentially harmful experiences for users of language models fine tuned on the learned reward.\looseness=-1

To apply our framework to this problem, we consider a simple reward model. Given a (prompt, response) pair, we extract a feature representation $\bfx$ using a pretrained BERT model, and our reward model parameterized by $\bftheta$, predicts the reward as $\bftheta^\top \bfx$. Given the score $y$ (normalised to fall within the range $[-1,1]$) on a particular attribute (say helpfulness), the learning algorithm proceeds to learn the reward model by performing gradient descent on the regularized hinge loss objective Eq.~\eqref{eq:hinge}. We follow the similar hyperprameter search space as the image classification example using the meta learning setup in Eq.~\eqref{eq:main_formulation}. The results for helpfulness and correctness are presented in Figure~\ref{fig:llm}. More details are deferred to Appendix~\ref{appendix:experiment}.


\section{CONCLUSION AND FUTURE WORK}
\vskip -0.1in
    This paper presents a novel framework for computing certified bounds on the worst-case impacts of dynamic data poisoning adversaries in machine learning. This framework lays the groundwork for developing robust algorithms, which we demonstrate for mean estimation and linear classifiers. Extending this framework to efficient algorithms for more general learning setups, particularly deep learning setups that use human feedback with potential for malicious feedback, would be a compelling direction of future work. Furthermore, considering alternative strategies for computing the bound in \eqref{eq:cert_general}, particularly ones driven by AI advances, would be a promising approach for scaling. Recent work has demonstrated that AI systems can be used to discover Lyapunov functions for dynamical systems \citep{alfarano2023discovering}, indicating that AI driven approaches could hold promise in this setting.

\newpage
\newpage
\bibliography{references}
\bibliographystyle{plainnat}
\newpage
\appendix
\onecolumn

\section{Notations}\label{appendix:notation}
\begin{table}[h!]\label{tab:notation}
\centering
\footnotesize
\begin{tabular}{lll}
\toprule
Notation & Interpretation & Belongs to \\
\midrule
   \vphantom{$F_\phi^d$}$\bftheta$  & Model Parameters & $\Thetaspace$ \\
   \vphantom{$F_\phi^d$}$\bfphi$ & Hyper-parameters & $\Phispace$ \\
   \vphantom{$F_\phi^d$}$\bfz$ & Data point & $\Zspace$ \\
   \vphantom{$F_\phi^d$}$F_{\phi}$ & Update rule & $\Thetaspace \times \Zspace \mapsto \Thetaspace$ \\
   \vphantom{$F_\phi^d$}$\bfzadv$ & Adversarial data point & $\mathcal{A}$\\
   \vphantom{$F_\phi^d$}$\ell_{\rm adv}$ & Adversarial objective fn. & $\Thetaspace \mapsto \mathbb{R}$ \\
   \vphantom{$F_\phi^d$}$\Pdata$ & Benign Data Dist. & $\mathcal{P}[\bfZ]$ \\
   \vphantom{$F_\phi^d$}$\Pi(\cdot| \bftheta, \bfzadv)$ & State Transition Kernel & $\Thetaspace \!\times\! \Zspace \mapsto \mathcal{P}[\Thetaspace]$\!\!\\
   \bottomrule
\end{tabular}\caption{Notation.}
\vskip -0.1in
\end{table}
\section{Proofs}\label{appendix:proofs}
\certificate*
\begin{proof}
The primal problem is defined below:
\begin{align*}
    &\sup_{\substack{d_{\pi} \in \mathcal{P}[\Thetaspace] \\ \pi \in \mathcal{P}[\Thetaspace \times \bfZ]}}  \quad \ExP{\bftheta \sim d_{\pi}(\cdot)}{\ell_{\textup{adv}}\br{\bftheta}}, \quad \text{subject to } \\
       & \ExP{\substack{\bftheta \sim d_{\pi}(\cdot) \\ \bfzadv \sim \pi(\cdot|\bftheta)}}{\Pi(\bftheta^\prime | \bftheta, \bfzadv)} =  d_{\pi}(\bftheta^\prime) \quad \forall \bftheta^\prime \in \Theta.
\end{align*}
For a given function $\lambda : \Thetaspace \mapsto \R$, the Lagrangian
can be written as:
\begin{align*}
    \mathcal{L}(d_{\pi}, \pi, \lambda) &= \ExP{\bftheta \sim d_{\pi}(\cdot)}{\ell_{\textup{adv}}\br{\bftheta}} + \int_{\bftheta^\prime \in \Thetaspace} \lambda(\bftheta^\prime)\bigg[ \ExP{\substack{\bftheta \sim d_{\pi}(\cdot) \\ \bfzadv \sim \pi(\cdot|\bftheta)}}{\Pi(\bftheta^\prime | \bftheta, \bfzadv)} - d_{\pi}(\bftheta^\prime)\bigg] d\bftheta^\prime \\
    &= \ExP{\bftheta \sim d_{\pi}(\cdot)}{\ell_{\textup{adv}}\br{\bftheta}} + \ExP{\substack{\bftheta \sim d_{\pi}(\cdot) \\ \bfzadv \sim \pi(\cdot|\bftheta)}}{\ExP{\bftheta^\prime \sim \Pi(\cdot | \bftheta, \bfzadv)}{\lambda(\bftheta^\prime) }} - \ExP{\bftheta^\prime \sim d_{\pi}(\cdot)}{\lambda(\bftheta^\prime)}\\
    &= \ExP{\substack{\bftheta \sim d_{\pi}(\cdot) \\ \bfzadv \sim \pi(\cdot|\bftheta)}}{\ExP{\bftheta^\prime \sim \Pi(\cdot | \bftheta, \bfzadv)}{\lambda(\bftheta^\prime) } + \ell_{\textup{adv}}\br{\bftheta} - \lambda(\bftheta)},
\end{align*}
which serves as an upper bound on the primal objective. Note that the value of the Lagrangian for a given $\lambda$ depends on the expectation of $\ExP{\bftheta^\prime \sim \Pi(\cdot | \bftheta, \bfzadv)}{\lambda(\bftheta^\prime) } + \ell_{\textup{adv}}\br{\bftheta} - \lambda(\bftheta)$ over the joint distribution of $\bftheta \times \bfzadv$. Since $\Thetaspace$ and $\mathcal{A}$ are compact, the supremum for a given $\lambda$ occurs for the distribution placing all its mass on the point $\bftheta, \bfzadv$ maximizing: $\ExP{\bftheta^\prime \sim \Pi(\cdot | \bftheta, \bfzadv)}{\lambda(\bftheta^\prime) } + \ell_{\textup{adv}}\br{\bftheta} - \lambda(\bftheta)$.

Therefore, for any choice of $\lambda$ and any feasible $d_{\pi}$, we have:
\begin{align*}
    \sup_{\substack{d_{\pi} \in \mathcal{P}[\Thetaspace] \\ \pi \in \mathcal{P}[\Thetaspace \times \bfZ]}} \mathcal{L}(d_{\pi}, \pi, \lambda) =  \sup_{\substack{\bftheta \in \Thetaspace \\ \bfzadv \in \Aspace}} \quad \ExP{\bftheta^\prime \sim \Pi(\cdot | \bftheta, \bfzadv)}{\lambda\br{\bftheta^\prime}} + \ell_{\textup{adv}}\br{\bftheta}-\lambda\br{\bftheta}.
\end{align*}
This completes the proof.
\end{proof}
\section{Mean Estimation}\label{appendix:mean_estimation}
\meancertificate*
\begin{proof}
We can write the learning algorithm in Eq.~\eqref{eq:dyn} for the case of mean estimation as follows:
\begin{align*}
    \bftheta_{t+1} = F\br{\bftheta_t, \bfz_t} + \eta \bfB \bfw_t,
\end{align*}
where $F\br{\bftheta, \bfz}=\bftheta\br{1-\eta} + \eta\bfz$, which is a linear transformation of $\bftheta$ followed by additive Gaussian noise.

The transition distribution for the parameter is given by:
\begin{align}
\mathbb{P}_{\bfS, \Pdata, \bfzadv}\br{\bftheta^\prime|\bftheta} = \epsilon \mathcal{N}\br{\bftheta^\prime|F(\bftheta, \bfzadv), \eta^2\bfS} + (1-\epsilon)\ExP{\bfz \sim \Pdata}{ \mathcal{N}\br{\bftheta^\prime|F(\bftheta, \bfz), \eta^2\bfS}} \label{eq:transition_mean}
\end{align}
which is a Gaussian distribution whose mean depends linearly on $\bftheta$ and $\bfzadv$.

 Then, we have from Eq.~\eqref{eq:certificate} that the certified bound on the adversarial objective is given by:
\begin{subequations}
\begin{align}
&\sup_{\substack{\bfzadv \in \mathcal{A} \\ \bftheta}} \epsilon\ExP{\bftheta^\prime \sim \mathcal{N}\br{F\br{\bftheta, \bfzadv}, \eta^2 \bfS}}{\lambda\br{\bftheta^\prime}}+(1-\epsilon)\ExP{\bfz \sim \Pdata}{\ExP{\bftheta^\prime \sim \mathcal{N}\br{F\br{\bftheta, \bfz}, \eta^2 \bfS}}{\lambda\br{\bftheta^\prime}}}-\lambda\br{\bftheta}+\ell_{\textrm{adv}}\br{\bftheta} \\
&\text{We choose $\lambda\br{\bftheta} = \bftheta^\top \bfA \bftheta + \bftheta^\top \bfb$ to be a quadratic function. Then we have:}\nonumber\\
&=\sup_{\substack{\bfzadv \in \mathcal{A} \\ \bftheta}} \epsilon\br{ \lambda\br{F\br{\bftheta, \bfzadv}} + \eta^2 \inner{\nabla^2 \lambda\br{0}}{\bfS}} + (1-\epsilon)\ExP{\bfz \sim \Pdata}{ \lambda\br{F\br{\bftheta, \bfz}} + \eta^2 \inner{\nabla^2 \lambda\br{0}}{\bfS}}-\lambda\br{\bftheta}+\ell_{\textrm{adv}}\br{\bftheta} \\
&=\sup_{\substack{\bfzadv : \|\bfzadv - \mu\|_2^2 \leq r \\ \bftheta}}  -\left\|\bmat{\bftheta \\ \bfzadv}\right\|^2_{\bfE^{-1}} + \bmat{\bftheta \\ \bfzadv}^\top \bmat{2(1 - \epsilon)\eta(1 - \eta)\bfA \mu - 2 \mu - \eta \bfb \\ \epsilon \eta \bfb}  \nonumber\\
&+ (1-\epsilon)(\eta^2 \Trace(\bfSigma \bfA) + \eta^2 \mu^\top \bfA \mu + \eta \bfb^\top \mu) + \mu^\top \mu, \\
&\text{where $\bfE = \bmat{ (1 - (1 - \eta)^2)\bfA - \bfI & -\eta\epsilon(1 - \eta)\bfA \\ -\eta\epsilon(1 - \eta)\bfA &  -\epsilon\eta^2 \bfA}$,}\nonumber\\
&\text{The dual function of this supremum (with dual variable $\nu$) can be written as: } \nonumber\\
&=\inf_{\nu \geq 0} \sup_{\substack{\bfzadv \\ \bftheta}}  -\left\|\bmat{\bftheta \\ \bfzadv}\right\|^2_{\bfD^{-1}} + \bmat{\bftheta \\ \bfzadv}^\top \bmat{2(1 - \epsilon)\eta(1 - \eta)\bfA \mu - 2 \mu - \eta \bfb \\ \epsilon \eta \bfb + 2 \nu \mu}  \nonumber\\
&+ (1-\epsilon)(\eta^2 \Trace(\bfSigma \bfA) + \eta^2 \mu^\top \bfA \mu + \eta \bfb^\top \mu) + \mu^\top \mu + \nu(r - \mu^\top \mu) \\
&\text{where $\bfD = \bmat{ (1 - (1 - \eta)^2)\bfA - \bfI & -\eta\epsilon(1 - \eta)\bfA \\ -\eta\epsilon(1 - \eta)\bfA &  -\epsilon\eta^2 \bfA - \nu\bfI}$.}\nonumber\\
&\text{The inner supremum is a quadratic expression in $\bfzadv, \bftheta$. A finite supremum exists if the} \nonumber \\ &\text{Hessian of the expression is negative semifdefinite. Plugging in the tractable maximizer}\nonumber\\ &\text{of the quadratic, we get:} \nonumber\\
& \inf_{\nu \geq 0}\frac{1}{4} \left\|\bmat{2(1 - \epsilon)\eta(1 - \eta)\bfA \mu - 2 \mu - \eta \bfb \\ \epsilon \eta \bfb + 2 \nu \mu}\right\|^2_{\bfD} + (1-\epsilon)(\eta^2 \Trace(\bfSigma \bfA) + \eta^2 \mu^\top \bfA \mu + \eta \bfb^\top \mu) \nonumber \\& + \mu^\top \mu + \eta^2 \Trace(\bfA\bfS) + \nu(r - \mu^\top \mu)   \; \text{such that} \; \bfD \succeq 0.
\end{align}
\end{subequations}
This completes the proof.

\end{proof}

\begin{lem}
The stationary distribution in the absence of adversary in Eq.~\eqref{eq:stationary} for the mean estimation problem for $\Pdata = \mathcal{N}(\mu, \bfSigma)$ takes the form:
\begin{align*}
   \mathbb{P}\br{\bfS, \Pdata} = \mathcal{N}(\mu, \eta^2 \bfS).
\end{align*}
\end{lem}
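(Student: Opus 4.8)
\emph{Proof plan.} The plan is to exploit that, once the adversary is switched off ($\epsilon=0$), the update \eqref{eq:mean_update} becomes the affine Gaussian recursion $\bftheta_{t+1} = (1-\eta)\bftheta_t + \eta\bfz_t + \eta\bfB\bfw_t$, where $\bfz_t \sim \mathcal{N}(\mu,\bfSigma)$ and $\bfw_t\sim\mathcal{N}(0,\bfI)$ are drawn independently of $\bftheta_t$. Since an affine image of a Gaussian plus independent Gaussian noise is again Gaussian, the Gaussian family is invariant under the benign transition kernel \eqref{eq:transition_mean}. Hence I would search for the stationary law inside the Gaussian family: it suffices to exhibit a pair $(m,\bfC)$ such that $\mathcal{N}(m,\bfC)$ is fixed by the kernel, and because $|1-\eta|<1$ makes the induced moment map a contraction, such a fixed point is unique and must therefore be $\mathbb{P}(\bfS,\Pdata)$.

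First I would pin down the mean. The deterministic part of the benign dynamics is $m_{t+1} = (1-\eta)m_t + \eta\mu$, which contracts to the unique fixed point $m=\mu$; this uses only $\ExP{}{\bfz_t}=\mu$ and $\ExP{}{\bfw_t}=0$ and fixes the stationary mean at $\mu$.

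Next I would determine the covariance. At the mean fixed point the deterministic contraction has driven $\bftheta_t$ to $\mu$, so the only randomness entering the one-step kernel \eqref{eq:transition_mean} is the injected defense noise $\eta\bfB\bfw_t$ with $\bfw_t\sim\mathcal{N}(0,\bfI)$, which carries covariance $\eta^2\bfB\bfB^\top=\eta^2\bfS$; evaluating the kernel at this fixed point then returns $\mathcal{N}(\mu,\eta^2\bfS)$, and I would verify directly that this law satisfies the defining stationarity relation \eqref{eq:stationary}. The main obstacle is precisely the correct bookkeeping of the benign data fluctuations in the covariance balance: I must argue that, in the benign stationary object of \eqref{eq:stationary}, the clean data $\bfz_t$ contributes only through its mean $\mu$, so that the data covariance $\bfSigma$ does not inflate the stationary spread and the covariance is governed entirely by the injected noise $\eta^2\bfS$. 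Once this covariance step is confirmed, combining it with $m=\mu$ yields $\mathbb{P}(\bfS,\Pdata)=\mathcal{N}(\mu,\eta^2\bfS)$, completing the argument.
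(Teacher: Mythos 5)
Your high-level strategy---restrict to the Gaussian family (which is indeed invariant under the benign affine-Gaussian kernel), invoke contractivity of $1-\eta$ for uniqueness, and identify the fixed point by matching moments---is sound, and your mean computation is exactly the paper's: solve $\bfm=(1-\eta)\bfm+\eta\mu$ to get $\bfm=\mu$. The genuine gap is the covariance step, which you yourself flag as ``the main obstacle'' and then defer rather than resolve; it cannot be resolved, because the claim you would need is false. With $\epsilon=0$ the recursion \eqref{eq:mean_update} is $\bftheta_{t+1}=(1-\eta)\bftheta_t+\eta\bfz_t+\eta\bfB\bfw_t$ with $\bfz_t\sim\mathcal{N}(\mu,\bfSigma)$ and $\bfw_t\sim\mathcal{N}(0,\bfI)$ independent of $\bftheta_t$, so if $\bftheta_t\sim\mathcal{N}(\bfm_t,\bfC_t)$ then
\begin{align*}
\bfm_{t+1}=(1-\eta)\bfm_t+\eta\mu,\qquad
\bfC_{t+1}=(1-\eta)^2\bfC_t+\eta^2\bfSigma+\eta^2\bfS .
\end{align*}
A stationary covariance must therefore solve the discrete Lyapunov equation $\bfC=(1-\eta)^2\bfC+\eta^2(\bfSigma+\bfS)$, whose unique solution is
\begin{align*}
\bfC=\frac{\eta^2}{1-(1-\eta)^2}\br{\bfSigma+\bfS}=\frac{\eta}{2-\eta}\br{\bfSigma+\bfS},
\end{align*}
not $\eta^2\bfS$. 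Two separate things defeat your argument: (i) the benign data fluctuations do enter---$\bfz_t$ contributes its full covariance $\eta^2\bfSigma$ each step, not just its mean; and (ii) even the injected defense noise is amplified by the geometric accumulation factor $1/\bigl(1-(1-\eta)^2\bigr)$, because at stationarity $\bftheta_t$ is not a point mass at $\mu$ (your phrase ``the deterministic contraction has driven $\bftheta_t$ to $\mu$'' conflates the mean dynamics with the distribution), and its spread feeds back through the $(1-\eta)^2\bfC_t$ term. In particular, your planned final step---directly verifying that $\mathcal{N}(\mu,\eta^2\bfS)$ is fixed by the kernel---would fail: one step maps it to a Gaussian with covariance $(1-\eta)^2\eta^2\bfS+\eta^2(\bfSigma+\bfS)\neq\eta^2\bfS$.

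For comparison, the paper's own proof follows the same moment-matching route and silently makes the same two leaps: it asserts that the conditional kernel \eqref{eq:transition_mean} (at $\epsilon=0$) has covariance $\eta^2\bfS$ for every $\bftheta$ (dropping the $\eta^2\bfSigma$ contribution of the benign data), and then asserts that the stationary marginal inherits the covariance of the conditional kernel (dropping the $(1-\eta)^2\bfC$ propagation term). So you have correctly located the crux---your bookkeeping worry is precisely the step that is also unjustified in the paper---but neither argument closes it: the fixed-point computation above shows the benign stationary law is $\mathcal{N}\br{\mu,\tfrac{\eta}{2-\eta}(\bfSigma+\bfS)}$, and accordingly the benign loss at stationarity is $\tfrac{\eta}{2-\eta}\Trace(\bfSigma+\bfS)$ rather than $\eta^2\Trace(\bfS)$. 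A correct write-up of your approach should prove this corrected statement (or explicitly restrict to a regime, e.g.\ $\bfSigma$ negligible and $\eta$ treated asymptotically, where the discrepancy is acceptable), rather than the lemma as stated.
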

\begin{proof}
The stationary distribution is tractable in this case. Recall from Eq.~\eqref{eq:transition_mean}, setting $\epsilon=0$, the transition distribution conditioned on $\bftheta$ is a Gaussian whose mean is linear in $\bftheta$. Therefore the stationary distribution:
\begin{align*}
    \ExP{\bftheta \sim \mathbb{P}}{\mathbb{P}_{\bfS, \Pdata}\br{\bftheta^\prime|\bftheta}},
\end{align*}
will be a Gaussian distribution as a sum of Gaussians is also a Gaussian. Let us assume the distribution has mean $\bfm$. Comparing the means we have:
\begin{align*}
    \bfm (1 - \eta) + \eta \mu &= \bfm\\
    \implies \bfm = \mu.
\end{align*}
Moreover, $\mathbb{P}_{\bfS, \Pdata}\br{\bftheta^\prime|\bftheta}$ is a Gaussian with covariance $\eta^2 \bfS$ for all $\bftheta$. Hence the expectation over $\mathbb{P}$ also has covariance $\eta^2 \bfS$. This concludes the proof.
\end{proof}
\begin{lem}
The loss at stationarity of the learning dynamics in the absence of an adversary for the mean estimation problem for $\Pdata = \mathcal{N}(\mu, \bfSigma)$ is given by:
\begin{align}
    \ExP{\bftheta \sim \mathbb{P}\br{\bfS, \Pdata}}{\ell\br{\bftheta}} = \eta^2 \Trace(\bfS).
\end{align}
\end{lem}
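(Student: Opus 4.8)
The plan is to combine the preceding lemma---which identifies the adversary-free stationary distribution as $\mathbb{P}\br{\bfS, \Pdata} = \mathcal{N}(\mu, \eta^2 \bfS)$---with a direct moment computation for a Gaussian. Since the mean estimation task seeks to recover $\mu = \ExP{\bfz \sim \Pdata}{\bfz}$, the natural benign loss is the squared estimation error $\ell\br{\bftheta} = \norm{\mu - \bftheta}_2^2$, matching the form of $\ell_{\textup{adv}}$ in \eqref{eq:adv_classification}'s mean-estimation counterpart. So the quantity to evaluate is
\begin{align*}
\ExP{\bftheta \sim \mathbb{P}\br{\bfS, \Pdata}}{\ell\br{\bftheta}} = \ExP{\bftheta \sim \mathcal{N}(\mu, \eta^2 \bfS)}{\norm{\mu - \bftheta}_2^2}.
\end{align*}

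The key step is the standard identity that for a random vector $\bftheta$ with mean $\mu$ and covariance $\Sigma_\theta$, the expected squared deviation from the mean equals the trace of the covariance, i.e. $\ExP{}{\norm{\bftheta - \mu}_2^2} = \ExP{}{\tranb{\bftheta - \mu}\br{\bftheta - \mu}} = \Trace\br{\Sigma_\theta}$. Concretely, I would write $\bftheta - \mu \sim \mathcal{N}(0, \eta^2 \bfS)$, expand $\norm{\bftheta - \mu}_2^2 = \Trace\br{\br{\bftheta - \mu}\tranb{\bftheta - \mu}}$ using linearity of trace and expectation, and substitute $\ExP{}{\br{\bftheta - \mu}\tranb{\bftheta - \mu}} = \eta^2 \bfS$, yielding $\eta^2 \Trace\br{\bfS}$.

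There is no substantive obstacle here: the only thing to pin down carefully is the identification of $\ell$ as the squared recovery error (so that the first moment of $\bftheta$ cancels cleanly against $\mu$), after which the result is immediate from the covariance structure supplied by the previous lemma. If one instead wished to keep $\ell$ general, the remaining effort would be purely bookkeeping---the Gaussian second-moment formula $\ExP{}{\tranb{\bftheta}\bfM\bftheta} = \tranb{\mu}\bfM\mu + \eta^2\Trace\br{\bfM\bfS}$ would be applied with $\bfM = \bfI$ and the centering at $\mu$---but this is routine and does not change the conclusion.
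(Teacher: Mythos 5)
Your proposal is correct and matches the paper's proof essentially verbatim: the paper likewise invokes the preceding lemma to identify the stationary distribution as $\mathcal{N}(\mu, \eta^2 \bfS)$, takes $\ell\br{\bftheta} = \|\bftheta - \mu\|_2^2$, centers to $\mathcal{N}(0, \eta^2\bfS)$, and concludes $\eta^2\Trace(\bfS)$ via the trace-of-covariance identity. No gaps; your extra remark about the general second-moment formula is just a harmless elaboration beyond what the paper records.
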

\begin{proof}
\begin{align*}
    &\ExP{\theta \sim \mathcal{N}(\mu, \eta^2 \bfS)}{\|\theta - \mu\|_2^2}\\
    &= \ExP{\theta \sim \mathcal{N}(0, \eta^2 \bfS)}{\|\theta\|_2^2}\\
    &= \eta^2 \Trace(\bfS).
\end{align*}
\end{proof}
\begin{remark}
We use CVXPY \citep{diamond2016cvxpy} to solve the optimization problems in Algorithm~\ref{alg:meta_mean}.
\end{remark}

\section{Binary Classification}\label{appendix:binary_classification}

\begin{lem}
If $\|\bftheta_0\|_{2} \leq \frac{1}{\sigma}$, then for all $t > 0$, $\|\bftheta_t\|_{2} \leq \frac{1}{\sigma}$.
\end{lem}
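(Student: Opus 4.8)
The plan is to prove this by induction on $t$, exploiting the contractive structure of the gradient update \eqref{eq:hinge_online_learning}. The base case $t=0$ is exactly the hypothesis $\|\bftheta_0\|_2 \leq \tfrac{1}{\sigma}$. For the inductive step, I would assume $\|\bftheta_t\|_2 \leq \tfrac{1}{\sigma}$ and bound $\|\bftheta_{t+1}\|_2$ using the explicit form $\bftheta_{t+1} = (1 - \sigma\eta)\bftheta_t + \eta\,\mathbb{I}[\bftheta_t^\top \bfz_t \leq 1]\,\bfz_t$.

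The two ingredients I would combine are: (i) the data points lie in the unit ball, $\|\bfz_t\|_2 \leq 1$, which holds both for benign samples (since $\bfz = y\bfx$ with $\|\bfx\|_2 \leq 1$) and for adversarial samples (since $\mathcal{A} = \{\bfzadv \mid \|\bfzadv\|_2 \leq 1\}$); and (ii) the assumption $0 \leq 1 - \sigma\eta$, i.e. $\sigma\eta \leq 1$, which lets me drop the absolute value when pulling the factor $(1-\sigma\eta)$ out of the norm. Applying the triangle inequality and bounding the indicator by $1$ then gives
\begin{align*}
    \|\bftheta_{t+1}\|_2 \;\leq\; (1 - \sigma\eta)\|\bftheta_t\|_2 + \eta\,\|\bfz_t\|_2 \;\leq\; (1-\sigma\eta)\tfrac{1}{\sigma} + \eta \;=\; \tfrac{1}{\sigma},
\end{align*}
which closes the induction. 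The key observation is that $\tfrac{1}{\sigma}$ is a fixed point of the worst-case scalar recursion $r \mapsto (1-\sigma\eta)r + \eta$, so the ball of radius $\tfrac{1}{\sigma}$ is forward-invariant.

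The only real obstacle is bookkeeping about the step-size/regularization regime: the argument requires $1 - \sigma\eta \in [0,1]$, so I would either state $\sigma\eta \leq 1$ as a standing assumption (it is the natural regime for gradient descent on the $\tfrac{\sigma}{2}\|\bftheta\|_2^2$ regularizer, where $1-\sigma\eta$ is the contraction factor) or remark that the experiments operate in this regime. This also explains the purpose of the lemma for Theorem~\ref{thm:classificationhingecertificate}: it certifies that $\Thetaspace = \{\bftheta \mid \|\bftheta\|_2 \leq \tfrac{1}{\sigma}\}$ is invariant under the dynamics, so the regularization implicitly confines the iterates to $\Thetaspace$ and no explicit projection step is needed, which is precisely step (i) of that theorem's proof.
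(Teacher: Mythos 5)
Your proof is correct and follows exactly the paper's approach: the paper's own proof is the one-line instruction ``Use Induction and triangle inequality,'' which is precisely the argument you spell out. Your version is actually more careful, since it makes explicit the implicit standing assumption $\sigma\eta \leq 1$ (needed so that $1-\sigma\eta \geq 0$ and the ball of radius $\tfrac{1}{\sigma}$ is forward-invariant), a condition the paper never states but which holds throughout its hyperparameter ranges.
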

\begin{proof}\label{lem:QPdual}
    Use Induction and triangle inequality.
\end{proof}
\begin{lem}
Consider the primal problem (here $\bfQ \succ 0$):
\begin{align*}
    &\sup_{\bfx, \bfy} -\bfx^\top \bfQ \bfx + \bfp_1^\top \bfx + \bfp_2^\top \bfy \\
    & \text{such that}\\
    & \bfA_{1i}^\top \bfx + \bfA_{2i}^\top \bfy \succeq \bfc_i \; \forall i \in [m].
\end{align*}
Its dual is the following oprimization problem:
\begin{align*}
    &\inf_{\nu_{1}, \ldots, \nu_{m} \succeq 0} \frac{1}{4} \|\bfp_1 + \sum_{i \in [m]} \bfA_{1i}\nu_i\|^2_{\bfQ^{-1}} - \sum_{i \in [m]} \nu_{i}^\top \bfc_i\\
    &\text{such that}\\
    & \bfp_2 + \sum_{i \in [m]} \bfA_{2i}\nu_i = \bf0.
\end{align*}
\end{lem}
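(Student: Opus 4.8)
The plan is to derive the stated dual directly by forming the Lagrangian of the primal maximization and eliminating the primal variables. For each of the $m$ (vector) inequality constraints, written as $\bfc_i - \bfA_{1i}^\top\bfx - \bfA_{2i}^\top\bfy \preceq \mathbf{0}$, I would attach a nonnegative multiplier $\nu_i \succeq 0$, giving the Lagrangian
\begin{align*}
\mathcal{L}(\bfx,\bfy,\nu)
&= -\bfx^\top\bfQ\bfx + \bfp_1^\top\bfx + \bfp_2^\top\bfy + \sum_{i\in[m]}\nu_i^\top\bigl(\bfA_{1i}^\top\bfx + \bfA_{2i}^\top\bfy - \bfc_i\bigr)\\
&= -\bfx^\top\bfQ\bfx + \Bigl(\bfp_1 + \textstyle\sum_i\bfA_{1i}\nu_i\Bigr)^\top\bfx + \Bigl(\bfp_2 + \textstyle\sum_i\bfA_{2i}\nu_i\Bigr)^\top\bfy - \sum_i\nu_i^\top\bfc_i,
\end{align*}
where the second line just collects the terms linear in $\bfx$ and in $\bfy$. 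The dual function is $g(\nu)=\sup_{\bfx,\bfy}\mathcal{L}(\bfx,\bfy,\nu)$, and the dual problem is $\inf_{\nu\succeq 0} g(\nu)$.

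Next I would carry out the inner supremum in two stages. The variable $\bfy$ appears only linearly and is unconstrained, so $\sup_\bfy\mathcal{L}=+\infty$ unless its coefficient vanishes; this is precisely the origin of the dual equality constraint $\bfp_2 + \sum_i\bfA_{2i}\nu_i = \mathbf{0}$. On the set where this holds the $\bfy$-term disappears, leaving the concave quadratic $-\bfx^\top\bfQ\bfx + \bfu^\top\bfx$ with $\bfu := \bfp_1 + \sum_{i\in[m]}\bfA_{1i}\nu_i$. Since $\bfQ\succ 0$, completing the square (equivalently, setting the gradient to zero at $\bfx^\star = \tfrac12\bfQ^{-1}\bfu$) gives maximal value $\tfrac14\bfu^\top\bfQ^{-1}\bfu = \tfrac14\|\bfu\|^2_{\bfQ^{-1}}$. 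Substituting back, $g(\nu)=\tfrac14\bigl\|\bfp_1 + \sum_i\bfA_{1i}\nu_i\bigr\|^2_{\bfQ^{-1}} - \sum_i\nu_i^\top\bfc_i$ on the feasible set and $+\infty$ off it, so minimizing over $\nu\succeq 0$ reproduces the claimed dual exactly.

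I do not expect a deep obstacle, as this is a standard Lagrangian computation; the two points that require care are (i) orienting the multiplier terms correctly so the constraints have the right sign for a \emph{maximization} problem, and (ii) treating the unconstrained linear variable $\bfy$, whose finiteness requirement produces an \emph{equality} constraint rather than an inequality — this is the only step that is not purely mechanical. The hypothesis $\bfQ\succ 0$ is exactly what guarantees the inner maximum over $\bfx$ is attained and yields the matrix-fractional form $\tfrac14\|\cdot\|^2_{\bfQ^{-1}}$. Note that to use this lemma as an upper bound inside the proof of Theorem~\ref{thm:classificationhingecertificate} only weak duality is needed, so no constraint qualification or strong-duality argument is required.
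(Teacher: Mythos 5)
Your proof is correct and follows essentially the same route as the paper's: both form the Lagrangian with multipliers $\nu_i \succeq 0$, maximize the concave quadratic in $\bfx$ at $\bfx^\star = \tfrac12\bfQ^{-1}\bigl(\bfp_1 + \sum_i \bfA_{1i}\nu_i\bigr)$, and obtain the equality constraint $\bfp_2 + \sum_i \bfA_{2i}\nu_i = \mathbf{0}$ from the requirement that the unconstrained linear variable $\bfy$ not drive the supremum to $+\infty$. Your write-up is in fact more careful than the paper's (explicit sign conventions for the maximization form, and the remark that only weak duality is needed downstream), but it is the same argument.
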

\begin{proof}
    We write the primal objective's dual function with Lagrange parameters $\nu_{1}, \ldots, \nu_{m} \succeq 0$ as follows:
\begin{align*}
    \sup_{\bfx, \bfy} -\bfx^\top \bfQ \bfx + (\bfp_1 + \sum_{i \in [m]} \bfA_{1i}\nu_i)^\top \bfx + (\bfp_2 + \sum_{i \in [m]} \bfA_{2i}\nu_i)^\top \bfy - \sum_{i \in [m]} \nu_{i}^\top \bfc_i.
\end{align*}
The supremum is maximized for:
\begin{align*}
    \bfx^\ast = \frac{1}{2} \bfQ^{-1} (\bfp_1 + \sum_{i \in [m]} \bfA_{1i}\nu_i),
\end{align*}
and since we don't have a lower bound on $\bfy \succeq \mathbf{0}$, we need $\bfp_2 + \sum_{i \in [m]} \bfA_{2i}\nu_i = \bf0$.

Plugging these in, the dual function completes the proof.
\end{proof}

\classificationhingecertificate*
\begin{proof}
Since $\epsilon$ fraction of the samples are corrupted by the adversary, the transition distribution conditioned on $\bfzadv$, the benign distribution $\Pdata$ and the defense parameter $\sigma$ is given by:
\begin{align*}
\mathbb{P}_{\sigma, \Pdata, \bfzadv}\br{\bftheta^\prime|\bftheta} &= \epsilon F(\bftheta, \bfzadv) + (1 - \epsilon) \ExP{\bfz \sim \Pdata} {F(\bftheta, \bfz)}\\
&= \epsilon ((1 - \sigma \eta) \bftheta +  \eta \mathbb{I}[\bftheta^\top \bfzadv \leq 1] \bfzadv) + (1 - \epsilon) ((1 - \sigma \eta)\bftheta + \eta \ExP{\bfz \sim \Pdata} {\mathbb{I}[\bftheta^\top \bfz \leq 1] \bfz}).
\end{align*}
We wish to analyse the adversarial loss at stationarity. We consider the following 2 cases at stationarity. Consider $\Thetaspace_1, \mathcal{A}_1$ as the space such that
(i) $\mathbb{I}[\bftheta^\top \bfzadv \leq 1] = 0$, the transition distribution at stationarity looks like:
\begin{align*}
\mathbb{P}_{\sigma, \Pdata, \bfzadv}\br{\bftheta^\prime|\bftheta} &= \epsilon F(\bftheta, \bfzadv) + (1 - \epsilon) \ExP{\bfz \sim \Pdata} {F(\bftheta, \bfz)}\\
&= \epsilon ((1 - \sigma \eta) \bftheta +  \eta \mathbb{I}[\bftheta^\top \bfzadv \leq 1] \bfzadv) + (1 - \epsilon) ((1 - \sigma \eta)\bftheta + \eta \ExP{\bfz \sim \Pdata} {\mathbb{I}[\bftheta^\top \bfz \leq 1] \bfz})\\
&= (1 - \frac{\sigma}{1 - \epsilon} (1 - \epsilon) \eta) \bftheta +  (1 - \epsilon)\eta \ExP{\bfz \sim \Pdata} {\mathbb{I}[\bftheta^\top \bfz \leq 1] \bfz}
\end{align*}
This can be interpreted as the stationary distribution in the absence of an adversary with learning rate $(1 - \epsilon) \eta$ and regularization $\frac{\sigma}{(1 - \epsilon)}$.

The other case is $\Thetaspace_2, \mathcal{A}_2$ such that (ii) $\mathbb{I}[\bftheta^\top \bfzadv \leq 1] = 1$.

Note that $\{\Thetaspace_1 \times \mathcal{A}_1\} \cup \{\Thetaspace_2 \times \mathcal{A}_2\}= \Thetaspace \times \mathcal{A}$.
We can treat both these cases separately in our original problem Eq.~\ref{eq:lp_inf} before going to the formulation in Eq.~\ref{eq:cert_general}. Thus we aim to find a certificate for each of these cases and we do so via the formulation in Eq.~\ref{eq:cert_general} and take the max of these upper bounds.


Choosing $\lambda\br{\bftheta} = \bftheta^\top \bfA \bftheta + \bftheta^\top \bfb$ to be a quadratic function, we derive the certificate for a fixed $\sigma$:
\begin{align*}
    &\textrm{OPT} = \sup_{\substack{\bfzadv \in \mathcal{A} \\ \bftheta : \|\bftheta\|_{2} \leq \frac{1}{\sigma}}} \epsilon\br{ \lambda\br{F\br{\bftheta, \bfzadv}}} + (1-\epsilon)\ExP{\bfz \sim \Pdata}{ \lambda\br{F\br{\bftheta, \bfz}}} -\lambda\br{\bftheta}+\ell_{\textrm{adv}}\br{\bftheta} \\
     &= \sup_{\substack{\bfzadv \in \mathcal{A} \\ \bftheta : \|\bftheta\|_{2} \leq \frac{1}{\sigma}}} [(1 - \sigma \eta)^2 - 1] \bftheta^\top \bfA \bftheta + \epsilon \eta^2 \mathbb{I}[\bftheta^\top \bfzadv \leq 1] {\bfzadv}^\top \bfA \bfzadv +   2 \epsilon (1 - \sigma \eta) \eta \mathbb{I}[\bftheta^\top \bfzadv \leq 1] \bftheta^\top \bfA  \bfzadv  \\ & + \epsilon \eta \mathbb{I}[\bftheta^\top \bfzadv \leq 1] \bfb^\top\bfzadv] + (1 - \epsilon)\ExP{\bfz \sim \Pdata}{\eta^2 \mathbb{I}[\bftheta^\top \bfz \leq 1] \bfz^\top \bfA \bfz + 2 \eta (1 - \sigma \eta) \mathbb{I}[\bftheta^\top \bfz \leq 1] \bftheta^\top \bfA \bfz + \eta \mathbb{I}[\bftheta^\top \bfz \leq 1] \bfb^\top  \bfz} \\& - \sigma \eta \bfb^\top \bftheta + \ExP{z \sim \Pdata}{\max \{0, 1 - \bftheta^\top \bfz\}}\\
     &\text{(Using sample average approximation for $\Pdata$ with data points from the training data set we get:)}\\
      &= \sup_{\substack{\bfzadv \in \mathcal{A} \\ \bftheta : \|\bftheta\|_{2} \leq \frac{1}{\sigma} \\ q_i \in \{0,1 \} \forall i \in [N]}} [(1 - \sigma \eta)^2 - 1] \bftheta^\top \bfA \bftheta + \epsilon \eta^2 \mathbb{I}[\bftheta^\top \bfzadv \leq 1] {\bfzadv}^\top \bfA \bfzadv +   2 \epsilon (1 - \sigma \eta) \eta \mathbb{I}[\bftheta^\top \bfzadv \leq 1] \bftheta^\top \bfA  \bfzadv \\& + \epsilon \eta \mathbb{I}[\bftheta^\top \bfzadv \leq 1] \bfb^\top\bfzadv  + (1 - \epsilon) \frac{1}{N}\sum_{i \in [N]} \bigg [\eta^2 q_i \bfz_i^\top \bfA \bfz_i + 2 \eta (1 - \sigma \eta) q_i \bftheta^\top \bfA \bfz_i + \eta q_i \bfb^\top \bfz_i\bigg] + \\&  \frac{1}{N}\sum_{i \in [N]} \bigg[q_i (1 - \bftheta^\top \bfz_i)\bigg] - \sigma \eta \bfb^\top \bftheta \\
      &\text{such that} \; 1 - \bftheta^\top \bfz_i \leq (1 + \frac{d}{\sigma})q_i, 1 - \bftheta^\top \bfz_i \geq -(1 + \frac{d}{\sigma})(1 - q_i) \forall i \in [N].
\end{align*}
To get rid of the indicator variable $\mathbb{I}[\bftheta^\top \bfzadv \leq 1]$, we can write the certificate as the maximum of 2 optimization problems, (i) $\OPT_1$ with $\mathbb{I}[\bftheta^\top \bfzadv \leq 1] = 0$ and, (ii) $\OPT_2$ with $\mathbb{I}[\bftheta^\top \bfzadv \leq 1] = 1$. Note that the optimization problem in $\OPT_1$ doesn't have $\bfzadv$ as a decision variable. We first focus on the relaxations on the optimization problem in $\OPT_2$ and a bound on the optimization problem in $\OPT_1$ can be obtained by dropping the terms in $\bfzadv$ from $\OPT_2$.

\begin{align*}
\OPT_2 = & \sup_{\substack{\bfzadv \in \mathcal{A} \\ \bftheta : \|\bftheta\|_{2} \leq \frac{1}{\sigma} \\ q_i \in \{0,1\} \forall i \in [N]}} [(1 - \sigma \eta)^2 - 1] \bftheta^\top \bfA \bftheta + \epsilon \eta^2 {\bfzadv}^\top \bfA \bfzadv +   2 \epsilon (1 - \sigma \eta) \eta \bftheta^\top \bfA  \bfzadv + \epsilon \eta \bfb^\top\bfzadv \\& + (1 - \epsilon) \frac{1}{N}\sum_{i \in [N]} \bigg [\eta^2 q_i \bfz_i^\top \bfA \bfz_i + 2 \eta (1 - \sigma \eta) q_i \bftheta^\top \bfA \bfz_i + \eta q_i \bfb^\top \bfz_i\bigg] +  \frac{1}{N}\sum_{i \in [N]} \bigg[q_i (1 - \bftheta^\top \bfz_i)\bigg] - \sigma \eta \bfb^\top \bftheta \\
      &\text{such that} \; \bftheta^\top \bfzadv \leq 1, 1 - \bftheta^\top \bfz_i \leq (1 + \frac{1}{\sigma})q_i, 1 - \bftheta^\top \bfz_i \geq -(1 + \frac{1}{\sigma})(1 - q_i) \forall i \in [N]\\
      &\text{(Relaxing integer variables $q_i$'s to continuous variables)}\\
      \leq & \sup_{\substack{\bfzadv \in \mathcal{A} \\ \bftheta : \|\bftheta\|_{2} \leq \frac{1}{\sigma} \\ q_i \in [0,1] \forall i \in [N]}} [(1 - \sigma \eta)^2 - 1] \bftheta^\top \bfA \bftheta + \epsilon \eta^2 {\bfzadv}^\top \bfA \bfzadv +   2 \epsilon (1 - \sigma \eta) \eta \bftheta^\top \bfA  \bfzadv + \epsilon \eta \bfb^\top\bfzadv \\& + (1 - \epsilon) \frac{1}{N}\sum_{i \in [N]} \bigg [\eta^2 q_i \bfz_i^\top \bfA \bfz_i + 2 \eta (1 - \sigma \eta) q_i \bftheta^\top \bfA \bfz_i + \eta q_i \bfb^\top \bfz_i\bigg] +  \frac{1}{N}\sum_{i \in [N]} \bigg[q_i (1 - \bftheta^\top \bfz_i)\bigg] - \sigma \eta \bfb^\top \bftheta \\
      &\text{such that} \; \bftheta^\top \bfzadv \leq 1, 1 - \bftheta^\top \bfz_i \leq (1 + \frac{1}{\sigma})q_i, 1 - \bftheta^\top \bfz_i \geq -(1 + \frac{1}{\sigma})(1 - q_i) \forall i \in [N].
\end{align*}

      Using McCormick relaxations for bilinear terms $q_i\bftheta$, we get:
\begin{align*}
      \OPT_2 \leq& \sup_{\substack{\bfzadv \in \mathcal{A}, \bftheta : \|\bftheta\|_2 \leq \frac{1}{\sigma}, \\ \bfq \in \R^N, \bfw \in \R^{dN}}} -\bmat{\bftheta \\ \bfzadv}^\top\bmat{[1 - (1 - \sigma \eta)^2] \bfA & -\epsilon (1 - \sigma \eta) \eta \bfA\\ -\epsilon (1 - \sigma \eta) \eta \bfA & -\epsilon \eta^2 \bfA} \bmat{\bftheta \\ \bfzadv} +
      \\& \bmat{-\sigma \eta \bfb \\ \epsilon \eta \bfb}^\top\bmat{\bftheta & \bfzadv} + \frac{1}{N} \sum_{1=1}^N \bmat{(1 - \epsilon) \eta^2 \bfz_i^\top \bfA \bfz_i + (1 - \epsilon) \eta \bfb^\top \bfz_i + 1 \\ 2(1 - \epsilon) \eta (1 - \sigma \eta) \bfA \bfz_i - \bfz_i}^\top\bmat{q_i & \bfw_i} \\
      &\text{such that} \\& \bfz_i^\top \bftheta + (1 + \frac{1}{\sigma}) q_i \geq 1 \; \forall i \in [N],
      \\& -\bfz_i^\top \bftheta -(1 + \frac{1}{\sigma}) q_i  \geq -(2 + \frac{1}{\sigma}) \; \forall i \in [N],
          \\& q_i \mathbf{1} / \sigma + \bfw_i \succeq \mathbf{0} \;\forall i \in [N], \\&
      -\bftheta - q_i \mathbf{1} / \sigma + \bfw_i \succeq  - \mathbf{1} / \sigma \;\forall i \in [N], \\&
       q_i \mathbf{1} / \sigma -\bfw_i \succeq \mathbf{0} \;\forall i \in [N], \\&
        \bftheta  - q_i \mathbf{1} / \sigma - \bfw_i \succeq -\mathbf{1} / \sigma\;\forall i \in [N],
      \\& \bfq \succeq \bf0,
      \\& -\bfq \succeq -\bf1,
      \\&1 - \bftheta^\top \bfzadv \geq 0.
\end{align*}

From the Mccormick envelope constraints, and the constraints on $\bftheta$ and $\bfq$, we can derive that $\bfw_i \geq 0 \; \forall i \in [N]$.
Plugging $\mathcal{A} = \{\bfzadv | \|\bfzadv\|_{2} \leq 1\}$. Now we utilize Lemma~\ref{lem:QPdual} to write the dual problem as:
\begin{align*}
    & \inf_{\substack{\nu_1, \nu_2, \nu_7 \in \R^N_{+} \\ \nu_3, \nu_4, \nu_5, \nu_6 \in \R^{N \times d}_{+} \\ \nu_8, \nu_9, \nu_{10} \in \R_{+}}} \frac{1}{4}\|\bmat{-\sigma \eta \bfb + \sum_{i \in [N]} \left((\nu_{1i} - \nu_{2i}) \bfz_i - \nu_{4i} +\nu_{6i} \right) \\ \epsilon \eta \bfb} \|^2_{\bfD^{-1}}\\
      & -\nu_1^\top \mathbf{1} + (2 + \frac{1}{\sigma}) \nu_2^\top \mathbf{1} + \frac{(\nu_4 + \nu_6)^\top \bf1}{\sigma} + \mathbf{1}^\top \nu_7 + \frac{\nu_{8}}{\sigma^2} + 2 \nu_9 + \nu_{10}.\\
      & \text{such that} \\
      & \bmat{(1 + \frac{1}{\sigma}) (\nu_{1i} - \nu_{2i})  - \frac{\mathbf{1}^\top(\nu_{4i} + \nu_{6i})}{\sigma} - \nu_{7i}\\ \nu_{3i} + \nu_{4i} - \nu_{5i} - \nu_{6i}} + \bfs(\bfz_i, \bfA, \bfb) \preceq \mathbf{0} \;\forall i \in [N].\\
      & \bfD \succeq 0
\end{align*}
where
\begin{align*}
    \bfD = \bmat{[1 - (1 - \sigma \eta)^2] \bfA + \nu_8 \bfI & -\epsilon (1 - \sigma \eta) \eta \bfA + \nu_9\\ -\epsilon (1 - \sigma \eta) \eta \bfA + \nu_9 & -\epsilon \eta^2 \bfA + \nu_{10}\bfI}.
\end{align*}
We use $\nu \succeq \mathbf{0}$ to compactly denote $\{\nu_1, \ldots, \nu_{10}\}$. Define the following affine functions in $\nu, \bfA, \bfb$:
\begin{align*}
    &\bfp(\bfb, \nu) = \frac{1}{2}\bmat{-\sigma \eta \bfb + \sum_{i \in [N]} \left((\nu_{1i} - \nu_{2i}) \bfz_i - \nu_{4i} +\nu_{6i} \right) \\ \epsilon \eta \bfb} \in \R^{2d},\\
    &\bfD(\bfA, \nu) = \bmat{[1 - (1 - \sigma \eta)^2] \bfA + \nu_8 \bfI & -\epsilon (1 - \sigma \eta) \eta \bfA + \nu_9\\ -\epsilon (1 - \sigma \eta) \eta \bfA + \nu_9 & -\epsilon \eta^2 \bfA + \nu_{10}\bfI} \in \mathbb{S}_{+}^{2d},\\
    &q(\nu) =-\nu_1^\top \mathbf{1} + (2 + \frac{1}{\sigma}) \nu_2^\top \mathbf{1} + \frac{(\nu_4 + \nu_6)^\top \bf1}{\sigma} + \mathbf{1}^\top \nu_7 + \frac{\nu_{8}}{\sigma^2} + 2 \nu_9 + \nu_{10} \in \R, \\
    &\bfr_i(\nu) = \bmat{(1 + \frac{1}{\sigma}) (\nu_{1i} - \nu_{2i})  - \frac{\mathbf{1}^\top(\nu_{4i} + \nu_{6i})}{\sigma} - \nu_{7i}\\ \nu_{3i} + \nu_{4i} - \nu_{5i} - \nu_{6i}} \in \R^{d + 1},\\
      &\bfs(\bfz_i, \bfA, \bfb) = \frac{1}{N} \bmat{\left((1 - \epsilon) \eta^2 \bfz_i^\top \bfA \bfz_i + (1 - \epsilon) \eta \bfb^\top \bfz_i + 1\right)\\\left( 2(1 - \epsilon) \eta (1 - \sigma \eta) \bfA \bfz_i - \bfz_i\right)} \in \R^{d + 1}.
\end{align*}
The certificate can thus be written compactly as:
\begin{align*}
    \OPT_2 \leq &\inf_{\nu \succeq \mathbf{0}} \|\bfp(\bfb, \nu)\|^2_{\bfD(\bfA, \nu)^{-1}} + q(\nu) \\
    &\text{such that} \\
    & \bfr_i(\nu) + \bfs(z_i, \bfA, \bfb) \preceq 0 \; \forall i \in [N], \bfD(\bfA, \nu) \succ \mathbf{0}.
\end{align*}
Similarly $\OPT_1$ can be upper bounded as:
\begin{align*}
    \OPT_1 \leq &\inf_{\nu \succeq \mathbf{0}} \|\bfp'(\bfb, \nu)\|^2_{\bfD'(\bfA, \nu)^{-1}} + q'(\nu) \\
    &\text{such that} \\
    & \bfr_i(\nu) + \bfs(z_i, \bfA, \bfb) \preceq 0 \; \forall i \in [N], \bfD'(\bfA, \nu) \succ \mathbf{0}.
\end{align*}
where:
\begin{align*}
    &\bfp'(\bfb, \nu) = \frac{1}{2}\bmat{-\sigma \eta \bfb + \sum_{i \in [N]} \left((\nu_{1i} - \nu_{2i}) \bfz_i - \nu_{4i} +\nu_{6i} \right)} \in \R^{d},\\
    &\bfD'(\bfA, \nu) = [1 - (1 - \sigma \eta)^2] \bfA + \nu_8 \bfI  \in \mathbb{S}_{+}^{d},\\
    &q'(\nu) =-\nu_1^\top \mathbf{1} + (1 + \frac{1}{\sigma}) \nu_2^\top \mathbf{1} + \frac{(\nu_4 + \nu_6)^\top \bf1}{\sigma} + \frac{\nu_{8}}{\sigma^2} \in \R. \\
\end{align*}
\end{proof}

\begin{align*}
      \OPT_2 \leq& \sup_{\substack{\bfzadv \in \mathcal{A}, \bftheta : \|\bftheta\|_2 \leq \frac{1}{\sigma}, \\ \bfq \in \R^N, \bfw \in \R^{dN}}} -\bmat{\bftheta \\ \bfzadv}^\top\bmat{[1 - (1 - \sigma \eta)^2] \bfA & -\epsilon (1 - \sigma \eta) \eta \bfA\\ -\epsilon (1 - \sigma \eta) \eta \bfA & -\epsilon \eta^2 \bfA} \bmat{\bftheta \\ \bfzadv} +
      \\& \bmat{-\sigma \eta \bfb \\ \epsilon \eta \bfb}^\top\bmat{\bftheta \\ \bfzadv} + \frac{1}{N} \sum_{1=1}^N \bmat{1 \\ (1 - \epsilon) \eta^2 \bfz_i^\top \bfA \bfz_i + (1 - \epsilon) \eta \bfb^\top \bfz_i \\ 2(1 - \epsilon) \eta (1 - \sigma \eta) \bfA \bfz_i}^\top\bmat{p_i \\ q_i \\ \bfw_i} \\
      &\text{such that} \\& \bfz_i^\top \bftheta + (1 + \frac{1}{\sigma}) q_i \geq 1 \; \forall i \in [N],
      \\& -\bfz_i^\top \bftheta -(1 + \frac{1}{\sigma}) q_i  \geq -(2 + \frac{1}{\sigma}) \; \forall i \in [N],
          \\& q_i \mathbf{1} / \sigma + \bfw_i \succeq \mathbf{0} \;\forall i \in [N], \\&
      -\bftheta - q_i \mathbf{1} / \sigma + \bfw_i \succeq  - \mathbf{1} / \sigma \;\forall i \in [N], \\&
       q_i \mathbf{1} / \sigma -\bfw_i \succeq \mathbf{0} \;\forall i \in [N], \\&
        \bftheta  - q_i \mathbf{1} / \sigma - \bfw_i \succeq -\mathbf{1} / \sigma\;\forall i \in [N],
      \\& \bfq \succeq \bf0,
      \\& -\bfq \succeq -\bf1,
      \\& \bfp \succeq \bf0,
      \\& -\bfp \succeq -\bf1,
      \\&1 - \bftheta^\top \bfzadv \geq 0,\\
      & \bftheta^\top \bfz_i + \frac{1}{\sigma} p_i \geq 0\\
      &- \bftheta^\top \bfz_i - \frac{1}{\sigma} p_i \geq -\frac{1}{\sigma}
\end{align*}
\section{Experiment Details}\label{appendix:experiment}
\subsection{Vision Datasets}
We use the following pre-processing for all datasets. Use pretrained ResNet18 on ImageNET to extract features as the output of the pre-final layer. Perform Singular Value Decomposition (SVD) to select the top $d=30$ directions with largest feature variation. We normalise our dataset to ensure zero mean, append $1$ to each feature vector to capture the bias term, and scale to ensure each datapoint has $\ell_2$ norm less than or equal to 1. We multiply the $\{+1, -1\}$ labels to our features.

We split the training set into $\Pdata$ and $\Ptarget$ which are used to compute the certificates. For the simulation of the learning algorithms with various attacks, the same $\Pdata$ and $\Ptarget$ are used. The evaluation of these learning algorithms is done on a test set which is distributionally similar to $\Ptarget$.

\subsection{Reward Modeling}
We use the following pre-processing the HelpSteer dataset. We pass as inputs a concatenated text of the prompt, response pair to a pretrained BERT model to extract features as the output of the pre-final layer. We then perform Singular Value Decomposition (SVD) to select the top $d=30$ directions with largest feature variation. We normalise our dataset to ensure zero mean, append $1$ to each feature vector to capture the bias term, and scale to ensure each datapoint has $\ell_2$ norm less than or equal to 1.

The scores on each attribute in the raw dataset are natural numbers between $[0, 4]$. We linearly scale it to lie in the range $[-1, 1]$. We then multiply the labels to our features.

We split the training set into $\Pdata$ and $\Ptarget$ which are used to compute the certificates. For the simulation of the learning algorithms with various attacks, the same $\Pdata$ and $\Ptarget$ are used. The evaluation of these learning algorithms is done on a test set which is distributionally similar to $\Ptarget$.
\end{document}


%

%

\onecolumn
\aistatstitle{Test}

\section{FORMATTING INSTRUCTIONS}

To prepare a supplementary pdf file, we ask the authors to use \texttt{aistats2025.sty} as a style file and to follow the same formatting instructions as in the main paper.
The only difference is that the supplementary material must be in a \emph{single-column} format.
You can use \texttt{supplement.tex} in our starter pack as a starting point, or append the supplementary content to the main paper and split the final PDF into two separate files.

Note that reviewers are under no obligation to examine your supplementary material.

\section{MISSING PROOFS}

The supplementary materials may contain detailed proofs of the results that are missing in the main paper.

\subsection{Proof of Lemma 3}

\textit{In this section, we present the detailed proof of Lemma 3 and then [ ... ]}

\section{ADDITIONAL EXPERIMENTS}

If you have additional experimental results, you may include them in the supplementary materials.

\subsection{The Effect of Regularization Parameter}

\textit{Our algorithm depends on the regularization parameter $\lambda$. Figure 1 below illustrates the effect of this parameter on the performance of our algorithm. As we can see, [ ... ]}

\vfill